\Crefname{equation}{Eq.}{Eqs.}
\Crefname{figure}{Fig.}{Figs.}
\Crefname{tabular}{Tab.}{Tabs.}
\Crefname{section}{Sec.}{Secs.}
\Crefname{appendix}{App.}{Apps.}
\Crefname{lemma}{Lem.}{Lems.}
\Crefname{theorem}{Thm.}{Thms.}
\Crefname{remark}{Rem.}{Rems.}
\Crefname{algorithm}{Alg.}{Algs.}
\Crefname{definition}{Def.}{Defs.}
\newtheorem{theorem}{Theorem}
\newtheorem*{theorem*}{Theorem}
\newtheorem*{lemma*}{Lemma}
\newtheorem{lemma}[theorem]{Lemma}
\newtheorem{defn}[theorem]{Definition}
\newtheorem{remark}{Remark}
\newcommand{\R}{{\mathbb R}}
\renewcommand{\P}{{\mathbb P}}
\newcommand{\E}{{\mathbb E}}
\newcommand{\cA}{{\mathcal A}}
\newcommand{\cB}{{\mathcal B}}
\newcommand{\cS}{{\mathcal S}}
\newcommand{\cF}{{\mathcal F}}
\renewcommand{\epsilon}{\varepsilon}
\renewcommand{\hat}{\widehat}
\DeclareMathOperator*{\argmin}{argmin}
\DeclareMathOperator*{\argmax}{argmax}
\newcommand{\indic}{\mathds{1}}
\def \papertitle{One Arrow, Two Kills: An Unified Framework for \\Achieving Optimal Regret Guarantees in Sleeping Bandits}
\newcommand{\red}[1]{\textcolor{red}{#1}}
\renewcommand{\paragraph}{%
  \@startsection{paragraph}{4}%
  {\z@}{0.1ex \@plus .5ex \@minus .1ex}{-1em}%
  {\normalfont\normalsize\bfseries}%
}
\title{\papertitle}
\author{
Pierre Gaillard \thanks{Univ. Grenoble Alpes, Inria, CNRS, Grenoble INP, LJK, 38000 Grenoble, France. {\tt pierre.gaillard@inria.fr}}
\and
Aadirupa Saha%
\thanks{Toyota Technological Institute at Chicago (TTIC), US; {\tt aadirupa@ttic.edu}}
\and
Soham Dan%
\thanks{IBM Research, US;
{soham.dan@ibm.com (Major part of the work was done while the author was at the University of Pennsylvania)}}
}
\date{}
\begin{document}

\maketitle

\begin{abstract}
{We address the problem of \emph{`Internal Regret'} in \emph{Sleeping Bandits} in the fully adversarial setup, as well as draw connections between different existing notions of sleeping regrets in the multiarmed bandits (MAB) literature and consequently analyze the implications: }  
	Our first contribution is to propose the new notion of \emph{Internal Regret} for sleeping MAB. We then proposed an algorithm that yields sublinear regret in that measure, even for a completely adversarial sequence of losses and availabilities.
	We further show that a low sleeping internal regret always implies a low external regret, and as well as a low policy regret for iid sequence of losses. The main contribution of this work precisely lies in unifying different notions of existing regret in sleeping bandits and understand the implication of one to another. 
	Finally, we also extend our results to the setting of \emph{Dueling Bandits} (DB)--a preference feedback variant of MAB, and proposed a reduction to MAB idea to design a low regret algorithm for sleeping dueling bandits with stochastic preferences and adversarial availabilities.
	%
	%
	The efficacy of our algorithms is justified through empirical evaluations.
\end{abstract}
\vspace{-10pt}

	
\section{Introduction}
\label{sec:intro}

The problem of online sequential decision-making in standard $K$-armed multiarmed bandit (MAB) is well studied in machine learning \cite{Auer00,mohri05} and used to model online decision-making problems 

\begin{figure}[h]
	\centering
		\includegraphics[width=0.5\textwidth]{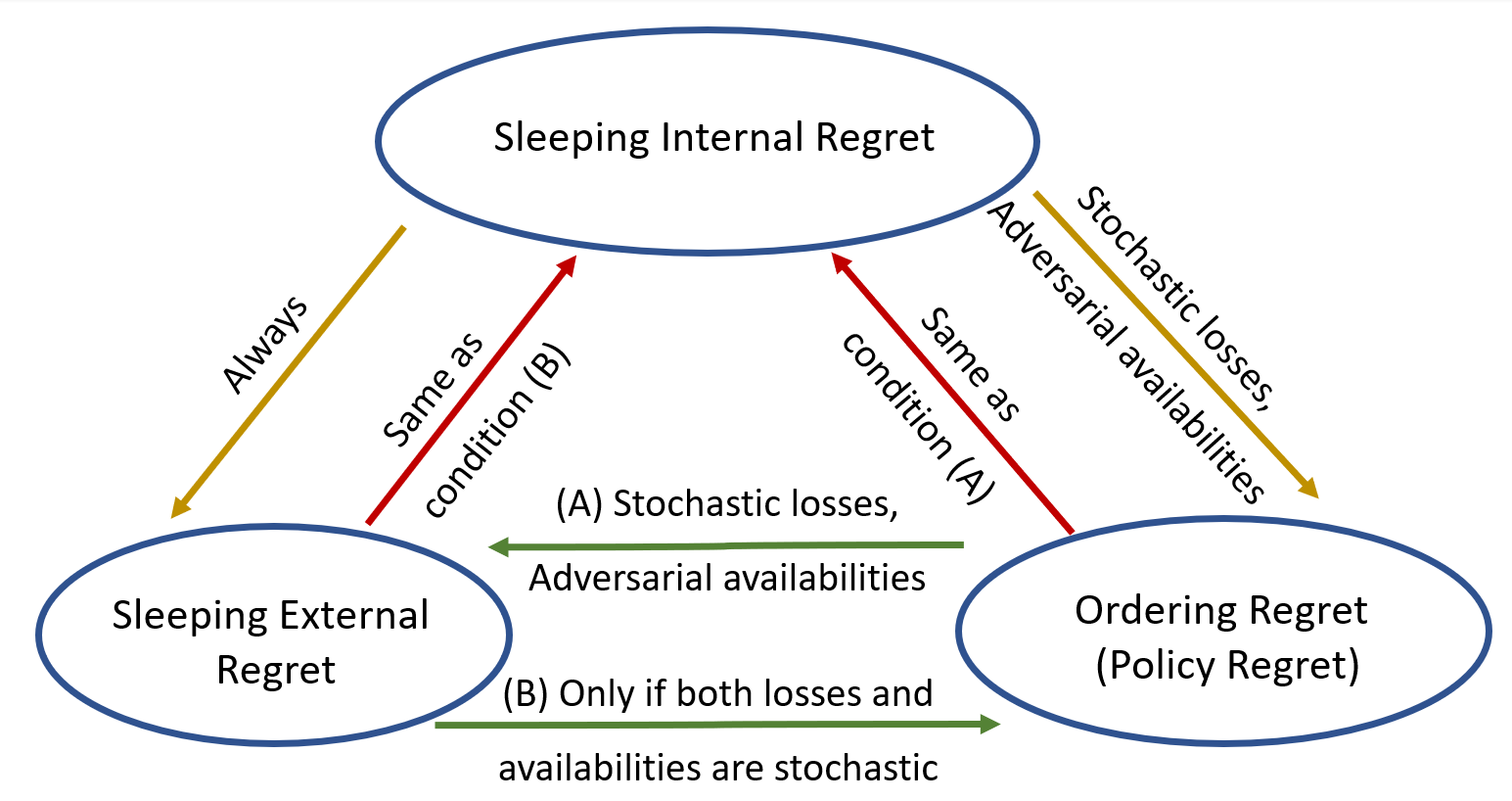}
	\caption{One Arrow, Two Kills: The connections between our proposed notion of Sleeping Internal Regret and different existing notions of regret for sleeping MAB and their implications} 
	\label{fig:rel}
\end{figure}

under uncertainty. 
Due to their implicit exploration-vs-exploitation tradeoff, bandits are able to model clinical trials, movie recommendations, retail management job scheduling etc., where the goal is to keep pulling the `best-item' in hindsight through sequentially querying one item at a time and subsequently observing a noisy reward feedback of the queried arm \cite{Even+06,Auer+02,Auer00,TS12,BubeckNotes+12}.
However, from a practical viewpoint, the decision space (or arm space $\cA = \{1,\ldots,K\}$) often changes over time due to unavailability of some items:  For example, some items might go out of stock in a retail store, some websites could be down, some restaurants might be closed etc. This setting is studied in the multiarmed bandit (MAB) literature as \emph{sleeping bandits} \cite{kanade09,neu14,kanade14,kale16}, where at any round the set $S_t \subseteq \cA$ of available actions could vary stochastically \cite{neu14,cortes+19} or adversarially \cite{kale16,kleinberg+10,kanade14}. 
%
%
 Over the years, several lines of research have been conducted for sleeping multi-armed bandits (MAB) with different notions of regret performance, e.g. policy, ordering, or sleeping external regret \cite{blum2007external,neu14,saha2020improved}.  
 %

In this paper, we introduce a new notion of sleeping regret, called \emph{Sleeping Internal Regret}, that helps to bridge the gaps between different existing notions of sleeping regret in MAB. We show that our regret notion can be applied to the fully adversarial setup, which implies sleeping external regret in the fully adversarial setup (i.e. when both losses and item availabilities are adversarial), as well as policy regret in the stochastic setting (i.e. when losses are stochastic). We further propose an efficient $O(\sqrt{T})$ worst-case regret algorithm for {sleeping internal regret}.  Finally we also motivate the implication of our results for the \emph{Dueling Bandits} (DB) framework, which is an online learning framework that generalizes the standard multiarmed bandit (MAB) \cite{Auer+02} setting for identifying a set of `good' arms from a fixed decision-space (set of items) by querying preference feedback of actively chosen item-pairs \cite{Yue+12,ailon2014reducing,Zoghi+14RUCB,SG19,SGwin18}. The main contributions can be listed as follows: 

\begin{itemize}[leftmargin=0pt, itemindent=10pt, labelwidth=5pt, labelsep=5pt, topsep=0pt,itemsep=2pt]
	\item \textbf{Connecting Existing Sleeping Regret. } The first contribution (\cref{sec:prob}) lies in relating the existing notions of sleeping regret given as:\begin{itemize}[leftmargin=0pt, itemindent=10pt, labelwidth=5pt, labelsep=5pt, topsep=-\parskip,nosep]
		\item The first one, \emph{sleeping external regret}, is mostly used in prediction with expert advice \cite{blum2007external,gaillard2014second}. If the learner had played $j$ instead of $k_t$ at all rounds where $j$ was available, we want the learner to not incur large regret.
		It is well-used to design dynamic regret algorithms~\cite{raj2020non,zhao2020dynamic,campolongo2021closer,zhang2019adaptive,wei2016tracking}. It has the advantage that efficient no-regret algorithms can be designed even when both $S_t$ and losses $\ell_t$ are adversarial. 
		\item The second one, called \emph{ordering regret}, is mostly used in the bandit literature~\cite{kleinberg+10,saha2020improved,kanade2014learning,neu2014online}. It compares the cumulative loss of the learner, with the one of the best ordering $\sigma^*$ that selects the best available action according to $\sigma^*$ at every round. No efficient algorithm exists when both $\ell_t$ and $S_t$ are adversarial: either $S_t$ or $\ell_t$ should be i.i.d \cite{kleinberg+10}. 
		\item We also note that in some works, policies $\pi^*$ (i.e., functions from subsets of $[K]$ to $[K]$) are considered instead of orderings $\sigma^*$, termed as \emph{policy regret} \cite{neu14,saha2020improved}. The latter two are equivalent when the losses are i.i.d., or come from an oblivious adversary with stochastic sleeping. 
	\end{itemize}

	\item \textbf{General Notion of Sleeping Regret. } Our second and one of the primary contribution lies in  introducing a new notion of sleeping regret, called \emph{Internal Sleeping Regret} (\cref{def:int_reg}), which we show actually unifies the different notions of sleeping regret under a general umbrella (see \cref{fig:rel}): We show that (i) Low sleeping internal regret always implies a low sleeping external regret, even under fully adversarial setup. (ii) For stochastic losses is also implies a low ordering regret (equivalently policy regret), even under adversarial availabilities. \emph{Thus we now have a tool, \emph{Sleeping Internal Regret}, optimizing which can simultaneously optimize all the existing notions of sleeping regret (and justifies the title of this work too!)} (\cref{sec:intreg}). 

	\item \textbf{Algorithm Design and Regret Implications. } The main contribution of this works is to propose an efficient algorithm (SI-EXP3, \cref{alg:siexp}) w.r.t. \emph{Sleeping Internal Regret}, and design an $O(\sqrt{T})$ regret algorithm (\cref{thm:internal}). As motivated, the generalizability of our regret further implies $O(\sqrt{T})$ external regret at any setting and also ordering regret for i.i.d losses \cref{rem:reg_implication}. We are the first to achieve this regret unification with only a single algorithm (\cref{sec:alg_internal}). 

	\item \textbf{Extensions: Generalized Regret for \emph{Dueling-Bandits} (DB) and Algorithm. } Another versatility of \emph{Internal Sleeping Regret} is it can be made useful for designing no-regret algorithms for the sleeping dueling bandits (DB) setup, which is a relative feedback based variant of standard MAB  \cite{Zoghi+14RUCB,ailon2014reducing,Busa21survey} (\cref{sec:dueling}). 
	\begin{itemize}[leftmargin=0pt, itemindent=10pt, labelwidth=5pt, labelsep=5pt, topsep=-\parskip-\partopsep,nosep]
		\item \textbf{General Sleeping DB. } Towards this, we propose a new and more unifying notion of sleeping dueling bandits setup that allows the environment to play from different subsets of available dueling pairs ($A_t \subseteq [K]^2$) at each round $t$. This generalizes standard notion of DB setting where $A_t = [K]^2$ without sleeping, but also the setup of Sleeping DB for $A_t = S_t \times S_t$, \cite{saha2021dueling}.
		
		\item \textbf{Unifying Sleeping DB Regret. } Next, taking ques from our notion of \emph{Sleeping Internal Regret} for MAB, we propose a generalized dueling bandit regret, \emph{Internal Sleeping DB Regret} (\cref{eq:regdb}), which unifies the classical dueling bandit regret \cite{Zoghi+14RUCB} as well as sleeping DB regret \cite{saha2021dueling} (\cref{rem:gen_db_reg}).
		
		\item \textbf{Optimal Algorithm Design. } Having established this new notion of sleeping regret in dueling bandits, we propose an efficient and order optimal $O(\sqrt{T})$ sleeping DB algorithm, using a reduction to MAB setup \cite{saha2022versatile} (\cref{thm:dueling}). This improves the regret bound of \cite{saha2021dueling} that only get $O(T^{2/3})$ worst-case regret even in the simpler $A_t = S_t \times S_t$ setting.
	\end{itemize}
	\item \textbf{Experiments. } Finally, in \cref{sec:expts}, we corroborate our theoretical results with extensive empirical evaluation (see \cref{sec:expts}). In particular, our algorithm significantly outperforms baselines as soon as there is dependency between $S_t$ and $\ell_t$. Experiments also seem to show that our algorithm can be used efficiently to converge to Nash equilibria of two-player zero-sum games with sleeping actions (see \cref{rem:nash}).
\end{itemize}


\textbf{Related Works. } 
The problem of regret minimization for stochastic multiarmed bandits (MAB) is widely studied in the online learning literature \citep{Auer+02,TS12,CsabaNotes18,Audibert+10}, 
and as motivated above, the problem of item non-availability in the MAB setting is a practical one, which is studied as the problem of \emph{sleeping MAB}  \cite{kanade09,neu14,kanade14,kale16}, 
for both stochastic rewards and adversarial availabilities \cite{kale16,kleinberg+10,kanade14}
as well as 
 adversarial rewards and stochastic availabilities \cite{kanade09,neu14,cortes+19}. 
In case of stochastic rewards and adversarial availabilities the achievable regret lower bound is known to be $\Omega(\sqrt{KT})$, $K$ being the number of actions in the decision space $\cA = [K]$. The well studied EXP$4$ algorithm does achieve the above optimal regret bound, although it is  computationally inefficient \cite{kleinberg+10,kale16}. The optimal and efficient algorithm for this case is by \cite{saha2020improved}, which is known to yield $\tilde O(\sqrt T)$ regret,\footnote{$\tilde O(\cdot)$ notation hides the logarithmic dependencies.}.

On the other hand over the last decade, the relative feedback variants of stochastic MAB problem has seen a widespread resurgence in the form of the Dueling Bandit problem, where, instead of getting noisy feedback of the reward of the chosen arm, the learner only gets to see a noisy feedback on the pairwise preference of two arms selected by the learner 
\citep{Zoghi+14RUCB,Zoghi+15,Komiyama+15,DTS,ADB,SK21,saha21,SKM21}, or even extending the pairwise preference to subsetwise preferences  \cite{Sui+17,Brost+16,SG18,SGwin18,SGinst20,GS21,Ren+18}. 

Surprisingly, there has been almost no work on dueling bandits in sleeping setup, despite the huge practicality of the problem framework. In a very recent work, \cite{saha2021dueling} attempted the problem of Sleeping DB for the setup of
stochastic preferences and adversarial availabilities, however there proposed algorithms can only yield a suboptimal regret guarantee of $O(T^{2/3})$. 
Our work is the first to achieve $\tilde O(\sqrt T)$ regret for Sleeping Dueling Bandits (see \cref{thm:dueling}).

	\section{Problem Formulation} 
\label{sec:prob}

In this section, we introduce problem of sleeping multiarmed bandit formally, followed by the definition of \emph{Internal Sleeping Regret} -- a new notion of learner's performance in sleeping MAB (\cref{sec:intreg}). The last part of this section discusses the different notions of existing regret bounds in Sleeping MAB (\cref{sec:clsreg}) and their connections (\cref{sec:rel}, summarized in \cref{fig:rel}).

\paragraph{Problem Setting: Sleeping MAB. } Let $[K] = \{1,\dots,K\}$ be a set of arms. At each round $t\geq 1$, a set of available arms $S_t \subseteq [K]$ is revealed to a learner, that is asked to select an arm $k_t \in S_t$, upon which the learner gets to observe the loss $\ell_t(k_t)$ of the selected arm. Note the sequence of item-availabilities $\{S_t\}_{t = 1}^T$ as well as the loss sequence $\{\ell_t\}_{t=1}^T$ can be stochastic or adversarial (oblivious) in nature. We consider the hardest setting of adversarial losses and availabilities, which clearly subsumes the other settings as special cases (see \cref{sec:intreg} for details).

The next thing to understand is how should we evaluate the learner or what is the final objective? Before proceeding to our unifying notion of \emph{Sleeping MAB regret, let us do a quick overview of existing notions of sleeping MAB regret studied in the prior bandit literature.}

\subsection{Existing Objectives for Sleeping MAB} 
\label{sec:clsreg}

\paragraph{1. External Sleeping Regret. } The first notion was introduced by \cite{blum2007external}. Here, the learner is compared with each arm, only on the rounds in which the arm is available:
\begin{equation}
    R_T^{\text{ext}}(k) := \sum_{t=1}^T \big( \ell_t(k_t) - \ell_t(k)\big) \indic\{ k \in S_t \} \,.
    \label{eq:sleeping_regret_ext}
\end{equation}
The learner is asked to control $\max_{k \in [K]} R_T(k) = o(T)$ as $T\to \infty$. In \cite{blum2007external}, the authors provide an algorithm which achieves $R_T(k) \leq O(\sqrt{T})$ for all $k$. 

\paragraph{2. Ordering Regret. } This second notion compares the performance of the learner on all rounds, with any fixed ordering $\sigma = (\sigma_1,\dots,\sigma_K) \in \Sigma$ of the arms, where $\Sigma$ denotes the set of all possible orderings of $[K]$: 
\begin{equation}
    R_T^{\text{ordering}}(\sigma) := \sum_{t=1}^T \ell_t(k_t) - \ell_t\big(\sigma(S_t)\big)\,,
    \label{eq:sleeping_regret_sigma}
\end{equation}
where $\sigma(S_t) = \big\{\sigma_k \text{ s.t. } k = \argmin\{i: \sigma_i \in S_t\}\big\}$ denotes the best arm available in $S_t$. Consequently, in this case, the learner's regret is evaluated is evaluated against the best ordering $\max_{\sigma \in \Sigma} R_T^{\text{ordering}}(\sigma)$. 

It is known that no polynomial time algorithm can achieve a sublinear regret without stochastic assumptions on the losses $\ell_t$ or the availabilities $S_t$, as the problem is known to be NP-hard when both rewards and availabilities are adversarial \cite{kleinberg+10,kanade14,kale16}. For adversarial losses and i.i.d. $S_t$ (where each arm is independently available according to a Bernoulli distribution), \cite{saha2020improved} proposed an algorithm with $O(\sqrt{T})$ regret. For i.i.d. losses and adversarial availabilities, a UCB based algorithm with logarithmic regret was proposed in \cite{kleinberg+10}.

\paragraph{3. Policy Regret} 
A policy $\pi: 2^{[K]} \mapsto [K]$ denotes here a mapping from a set of available actions/experts to an item. Let $\Pi:=\{\pi \mid 2^{[K]} \mapsto [K]\}$ be the class of all policies. In this case, the regret of the learner is measured against a fixed policy $\pi$ is defined as:
\begin{align}
\label{eq:reg}
R_{T}^{\text{policy}}(\pi) =  \E\bigg[ \sum_{t=1}^{T}\ell_t(i_t) - \sum_{t=1}^{T}\ell_t(\pi(S_t)) \bigg],
\end{align}
where the expectation is taken w.r.t. the availabilities and the randomness of the player's strategy \citep{saha2020improved}. As usual, in this case, the learner's regret is evaluated is evaluated against the best policy $\max_{\pi \in \Pi} R_T^{\text{policy}}(\pi)$. 

\subsection{Relations across Different Notions of Existing Sleeping MAB Regret}
\label{sec:rel}
 One may wonder how these above notions are related. Is one stronger than the other? Or does optimizing one implies optimizing the other? Under what assumptions on the sequence of losses $\{\ell_t\}_{t \in [T]}$ and availabilities $\{S_t\}_{t \in [T]}$? We answer all these questions in this section and also summarized them in \cref{fig:rel}. 

\paragraph{1. Relationship between (ii) Ordering Regret and (iii) Policy Regret. } 

These two notions are very close, in principle they are equivalent in all practical contexts where they can be controlled. 
Note for stochastic losses and availabilities, it is easy to see both are equivalent, i.e. $\smash{\max_{\sigma \in \Sigma} R_T^{\text{ordering}}(\sigma)} = \smash{\max_{\pi \in \Pi} R_T^{\text{policy}}(\pi)}$.
 In fact, even when either losses or the availabilities are stochastic, and losses are independent of the availabilities (which are the only settings in which algorithms exist for these notions), we can claim the same equivalence! See \cref{app:pol_vs_ord} for a proof. 
 Thus, \emph{for the rest of this paper, we will only work with Ordering Regret} ($R_T^{\text{ordering}}$). 

\paragraph{2. Relationship between (i) External Sleeping Regret and (ii) Ordering Regret. } 

\begin{itemize}[leftmargin=0pt, itemindent=10pt, labelwidth=5pt, labelsep=5pt, topsep=0pt]
    \item \textbf{Does Ordering Regret \eqref{eq:sleeping_regret_sigma} Implies External Regret \eqref{eq:sleeping_regret_ext}?}
    \begin{itemize}[leftmargin=0pt, itemindent=10pt, labelwidth=5pt, labelsep=5pt, topsep=0pt, nosep]
        \item \textbf{Case (i): Stochastic losses, Adversarial $S_t$: } Yes, in this case it does. Since losses are stochastic, let at any round $t$, $\E[\ell_t(i)] = \mu_i$ for all $i \in [K]$. Then, \vspace*{-10pt}
        \begin{align*}
        \E[R^{\text{ext}}_T(k)] & =\sum_{t=1}^T (\mu_{k_t}-\mu_k) \indic \lbrace k \in S_t \rbrace 
        \\ 
        & \leq \sum_{t=1}^T (\mu_{k_t}-\mu_{k_t^*})\indic \lbrace k \in S_t \rbrace 
        \\
        & \leq \sum_{t=1}^T (\mu_{k_t}-\mu_{k^*})= \E[R^{\text{ordering}}_T(\sigma)]
	    \end{align*}
	    where the first inequality simply follows by the definition of $k_t^* = \sigma^*(S_t)$, $\sigma^*$ being the best ordering in the hindsight, and by noting that for i.i.d. losses for any $i \in S_t$, $\mu_i - \mu_{k_t^*} \geq 0$. 
        \item \textbf{Case (ii): Adversarial losses, Stochastic $S_t$: } The implication does not hold in this case. We can construct examples to show that it is possible to have $\E[R^{ordering}_T(\sigma)]=0$ but $\E[R^{ext}_T(k)]=O(T)$ for some $k \in [K]$ (see \cref{app:prob}). The key observation lies in making the losses $\ell_t$ dependent of availability $S_t$.
    \end{itemize}
    
    \item \textbf{Does External Regret \eqref{eq:sleeping_regret_ext} Implies Ordering Regret \eqref{eq:sleeping_regret_sigma}?} 
    Clearly, this direction is not true in general, as indeed, it would otherwise contradict the hardness result for ordering regret: This is since minimizing ordering regret is known to be NP-Hard for adversarial $\ell_t$ and $S_t$ \cite{kleinberg+10}, while one can easily construct efficient $\tilde{O}(\sqrt T)$ regret algorithms for external regret in the fully adversarial setup, e.g. even our proposed algorithm SI-EXP3 achieves that (see \cref{rem:reg_implication}). Let us analyze in a more case by case basis:  
    \begin{itemize}[leftmargin=0pt, itemindent=10pt, labelwidth=5pt, labelsep=5pt, topsep=-\parskip, nosep]
        \item \textbf{Case (i) Stochastic losses, Adversarial $S_t$:} No! Even for i.i.d. losses the implication does not hold for adversarial sleeping. To see this, we consider the following counter example with three arms ($K=3$). Assume that the arms incur constant losses $\ell_t(k) = k$ when they are available. During the first $T/2$ rounds, we set $S_t = \{1,2\}$ so that the worst arm is unavailable and during the last $T/2$ rounds, the best arm is the one that is sleeping, i.e., $S_t = \{2,3\}$. Then, an algorithm that selects the first arm for $t=1,\dots,T/2$ and the worst arm for $t=T/2+1,\dots,T$ satisfies $R_T^\text{ext}(k) = 0$ for any $k\in [3]$. Yet, $R_T^\text{ordering}\big((1,2,3)\big) = T/2$ because the algorithm chooses the worst arm $3$ instead of $2$ when $S_t=\{2,3\}$. 
        \item \textbf{Case (ii) Adversarial losses, Stochastic $S_t$:} The implication is not true in this case as well. We can simply do the same counter-example by taking i.i.d. availability sets: $S_t = \{1,2\}$ with probability $1/2$ and $S_t = \{2,3\}$ otherwise.  
    \end{itemize}
    This essentially shows ordering regret is a stronger notion of regret compared to external regret.
\end{itemize}

To summarize the above relations, precisely, we present them pictorially in \cref{fig:rel}. However, it is already hard to keep track of the relations of so any different notions of regret! An even more daunting task is, which one to work with? Does optimizing one, necessarily guarantee a low regret in another? we are seeking for a more general sleeping notion that would imply both. To solve this, we introduce thereafter a new notion of sleeping MAB regret that unifies the above notions of regret under a general umbrella.  

\subsection{Internal Sleeping Regret: A New Performance Objective for Sleeping MAB}
\label{sec:intreg}

The notion of \emph{Internal Regret} was introduced  in the theory of repeated games~\cite{foster1999regret}, 
and largely studied in online learning since then, see among other \cite{cesa2006prediction, stoltz2005incomplete, stoltz2005internal, blum2007external}.
Roughly, a small internal regret for some pair $(i,j)$ implies at any round $t$, learner would not have regretted playing $j$, where she actually played arm-$i$ instead. Drawing motivation, for our sleeping MAB setup, we define the notion as follows:

\begin{defn}[Internal Sleeping Regret]
\label{def:int_reg}
For any pair of arms $(i,j) \in [K]^2$, the \emph{internal sleeping regret} is \vspace*{-7pt}
\begin{equation}
	\hspace*{-9pt} R_T^{\text{int}}(i \! \to\! j) \! :=\!\! \sum_{t=1}^T \big( \ell_t(k_t) - \ell_t(j)\big) \indic\{i = k_t, j \in S_t \}. \vspace*{-10pt}
	\label{eq:sleeping_regret_int}
\end{equation}
\end{defn}

Typically, optimizing $R_T^{\text{int}}(i \to j)$ implies, we want that if the learner had played $j$ on all the rounds where he played $i$ and $j$ was available, he does not incur large regret. The strength of this notion is that it can be minimized efficiently (as detailed in \cref{sec:alg_internal}) for general adversarial losses and availabilities which is the key behind our main results (see \cref{rem:reg_implication} and \ref{rem:gen_db_reg}).

	\subsection{Generalizing Power of Internal Sleeping Regret}

In this section, we discuss, how $R_T^{\text{int}}$ generalizes the other existing notions of sleeping regret as discussed in \cref{sec:clsreg}.  

\paragraph{1. Internal Regret vs External Regret} We start by noting that following is a well-known result in the classical online learning setting~\citep{stoltz2005internal} (without sleeping).
 
 \begin{lemma}[Internal Regret Implies External Regret Always\cite{stoltz2005internal}]
\label{lem:ext_int}
For any sequences $(\ell_t)$, $(S_t)$, and  any algorithm, 
$
    R_T^{\text{ext}}(k) = \sum_{i=1}^K R_T^{\text{int}}(i\to k)
$ for all $k \in [K]$.
\end{lemma}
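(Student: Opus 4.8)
The plan is to unwind the definitions and observe that the claimed identity is essentially a bookkeeping statement: the external regret against arm $k$ accumulates over all rounds where $k$ is available, and on each such round the learner actually played \emph{some} arm $i = k_t$, so that round's contribution $(\ell_t(k_t) - \ell_t(k))\indic\{k \in S_t\}$ is exactly the contribution of the internal regret term $R_T^{\text{int}}(i \to k)$ for that particular $i$. Summing over $i$ therefore recollects the whole external-regret sum.

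Concretely, I would start from the right-hand side and write
\begin{align*}
\sum_{i=1}^K R_T^{\text{int}}(i \to k)
&= \sum_{i=1}^K \sum_{t=1}^T \big(\ell_t(k_t) - \ell_t(k)\big)\indic\{i = k_t,\, k \in S_t\} \\
&= \sum_{t=1}^T \big(\ell_t(k_t) - \ell_t(k)\big)\indic\{k \in S_t\} \sum_{i=1}^K \indic\{i = k_t\}.
\end{align*}
The inner sum $\sum_{i=1}^K \indic\{i = k_t\}$ equals $1$, since at round $t$ the learner selects exactly one arm $k_t \in S_t \subseteq [K]$; hence the expression collapses to $\sum_{t=1}^T (\ell_t(k_t) - \ell_t(k))\indic\{k \in S_t\}$, which is precisely $R_T^{\text{ext}}(k)$ by \eqref{eq:sleeping_regret_ext}. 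Swapping the order of the two finite sums is trivially justified, and nothing here depends on how $(\ell_t)$, $(S_t)$, or the algorithm are generated, so the identity holds for arbitrary (adversarial) sequences and any (possibly randomized) algorithm, with the understanding that if the algorithm is randomized the identity holds pathwise (and hence also in expectation).

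There is essentially no obstacle: the only point requiring a word of care is that $k_t$ is a single well-defined arm in $[K]$ on every round (so the indicators $\{i = k_t\}$ partition the event), and that the round index set is the same on both sides — both sides only ever ``see'' round $t$ through the factor $\indic\{k \in S_t\}$ and the loss gap $\ell_t(k_t) - \ell_t(k)$. If one wants to be fully careful about randomization, one notes that the equality is between random variables and then takes expectations; since the external and internal regrets in the statement are the realized (non-expectation) quantities, even the pathwise statement suffices.
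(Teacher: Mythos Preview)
Your proof is correct and is exactly the argument the paper has in mind: the paper simply states that ``the proof follows from the regret definitions,'' and your computation is precisely that unwinding of the definitions (partitioning the external-regret sum according to which arm $i = k_t$ was played).
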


The proof follows from the regret definitions. Thus any uniform upper-bound on the internal regret, implies the same bound for the external regret up to a factor $K$. The other direction is not true though!

\paragraph{2. Internal Sleeping Regret vs Ordering Regret}

\begin{lemma}[Internal Regret Implies Ordering  (for stochastic Losses)] 
\label{lem:implication}
Assume that the losses $(\ell_t)_{t\geq 1}$ are i.i.d..  Then, for any ordering $\sigma$, we have 
\[
    \E\big[R_T^{\text{ordering}}(\sigma)\big] \leq \sum_{i=1}^K \sum_{j \in D_i} \E\big[ R_T^{\text{int}}(i\to j) \big] \,. \vspace*{-5pt}
\]
where $D_i$ is the set of arms such that $\E[\ell_t(j)] \leq \E[\ell_t(i)]$. 
\end{lemma}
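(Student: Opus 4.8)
The plan is to unfold the ordering regret round by round, replace each per-round loss difference by a chain of internal-regret increments that go "downhill" in expected loss, and then sum up. Fix an ordering $\sigma$. On round $t$, write $k_t$ for the learner's action and $k_t^\star := \sigma(S_t)$ for the best available arm according to $\sigma$. Since the losses are i.i.d., let $\mu_i := \E[\ell_t(i)]$, and note that $k_t^\star$ is a deterministic function of $S_t$. The per-round contribution to $\E[R_T^{\text{ordering}}(\sigma)]$ is $\E[\mu_{k_t} - \mu_{k_t^\star}]$ (using independence of $\ell_t$ from the history that determines $k_t$ and from $S_t$).

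The key combinatorial step is to telescope $\mu_{k_t} - \mu_{k_t^\star}$ along a path of arms, each step of which is a legitimate internal-regret move. Concretely, on each round I would build a sequence of arms $k_t = j_0, j_1, \dots, j_m = k_t^\star$ all lying in $S_t$, with $\mu_{j_0} \geq \mu_{j_1} \geq \cdots \geq \mu_{j_m}$ — for instance, take the arms of $S_t$ sorted by increasing $\mu$ starting from $k_t$ down to $k_t^\star$ (here one uses that $k_t^\star$ minimizes $\mu$ over $S_t$ when $\sigma$ orders by $\mu$; for a general $\sigma$ one instead compares $\mu_{k_t^\star}$ to $\mu_{k_t}$ directly and notes $\mu_{k_t}-\mu_{k_t^\star}$ is bounded by the single-step move $k_t \to k_t^\star$ whenever $\mu_{k_t^\star}\le \mu_{k_t}$, which holds on the rounds that matter). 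Then
\[
\mu_{k_t} - \mu_{k_t^\star} = \sum_{r=0}^{m-1} \big(\mu_{j_r} - \mu_{j_{r+1}}\big) \le \sum_{i=1}^K \sum_{j \in D_i} \big(\mu_{i} - \mu_{j}\big)\,\indic\{i = k_t,\ j \in S_t\},
\]
since each consecutive pair $(j_r, j_{r+1})$ satisfies $j_{r+1} \in D_{j_r}$ and $j_r = k_t$ only for $r=0$; the inequality is because we are adding extra nonnegative terms. Taking expectations, using $\mu_{i}-\mu_{j} = \E[\ell_t(k_t) - \ell_t(j) \mid k_t = i, j\in S_t]$ on the event $\{k_t = i\}$, and summing over $t$ gives exactly $\E[R_T^{\text{ordering}}(\sigma)] \le \sum_{i=1}^K\sum_{j\in D_i}\E[R_T^{\text{int}}(i\to j)]$.

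The main obstacle is handling the conditioning correctly: $k_t$ depends on the past losses and on $S_t$, so to turn $\mu_{k_t} - \mu_j$ into a conditional expectation of $\ell_t(k_t) - \ell_t(j)$ I need that $\ell_t$ is independent of $(k_t, S_t)$, which follows from the i.i.d. assumption on $(\ell_t)$ and the fact that the learner's strategy and the availabilities only use information up to time $t-1$ (plus $S_t$ itself). A second, more cosmetic subtlety is whether $\sigma$ is assumed to order arms consistently with $\mu$; if not, the telescoping path argument needs the mild observation that on any round where $\mu_{k_t^\star} > \mu_{k_t}$ the round contributes negatively and can be dropped, so it suffices to bound the rounds where the direct move $k_t \to k_t^\star$ is itself a $D$-move. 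I would state the argument for the $\mu$-sorted path and remark that the general case reduces to it.
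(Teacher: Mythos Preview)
Your overall structure is right and lands on the same inequality as the paper: reduce to the $\mu$-sorted ordering, use independence of $\ell_t$ from $(k_t,S_t)$ to replace losses by means, and bound $\mu_{k_t}-\mu_{k_t^\star}$ by $\sum_{j\in D_{k_t}\cap S_t}(\mu_{k_t}-\mu_j)$. The conditioning discussion is also on target.

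However, the telescoping chain is a red herring and your stated justification for the key inequality does not actually work. The right-hand side $\sum_{i}\sum_{j\in D_i}(\mu_i-\mu_j)\indic\{i=k_t,\,j\in S_t\}$ only contains terms with $i=k_t$; none of the intermediate telescoping increments $(\mu_{j_r}-\mu_{j_{r+1}})$ with $r\ge 1$ appear there, so the sentence ``$j_r=k_t$ only for $r=0$; the inequality is because we are adding extra nonnegative terms'' does not establish anything. What makes the inequality true is precisely the one-step observation you tucked into the parenthetical: since $k_t^\star$ minimizes $\mu$ over $S_t$ (after reducing to $\sigma^*$), we have $k_t^\star\in D_{k_t}\cap S_t$, so the single term $(i,j)=(k_t,k_t^\star)$ on the right already equals $\mu_{k_t}-\mu_{k_t^\star}$, and every other term on the right is nonnegative. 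That is exactly the paper's proof: it writes $\mu_{k_t}-\mu_{k_t^*}=\sum_i\sum_{j\in D_i}(\mu_i-\mu_j)\indic\{i=k_t,\,j=k_t^*\}$ and then relaxes $\{j=k_t^*\}$ to $\{j\in S_t\}$. Drop the chain and use this direct move throughout; the argument then becomes a clean three-line computation.
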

Therefore, any algorithm that satisfies $\E\big[ R_T^{\text{int}}(i\to j) \big]  \leq O(\sqrt{T})$, also satisfies $ \E\big[R_T^{\text{ordering}}(\sigma)\big] \leq O(\sqrt{T})$. 
The proof is deferred to the \cref{app:int_to_ord}.

\begin{remark}
\label{rem:adl_stcs}
An interesting research direction for the future would be to 
see if the sleeping internal regret also implies the ordering regret for adversarial losses and stochastic availabilities?  {Our experiments seem to point into this direction} (see \cref{app:expts}), but despite our efforts, we could not prove it. Such a result would in particular imply that any algorithm that can achieve sublinear regret w.r.t. sleeping internal regret $R_T^{\text{int}}$ (we in fact proposed such an algorithm in \cref{sec:alg_internal}, see SI-EXP3), would actually satisfy a best-of-both worlds guarantee! That is if either the losses or the availabilities are stochastic, the algorithm it will in turn incur a sublinear regret w.r.t. ordering regret $R_T^{\text{ordering}}$ as well. 
%
\end{remark}

\begin{remark}
On the other hand, it is well known that for adversarial losses, a small external regret does not imply a small internal regret~\cite{stoltz2005internal} even when $S_t = [K]$. But, when losses are stochastic and availabilities are adversarial, minimizing the ordering regret does control the internal regret (see \cref{app:int_to_ord}. 
\end{remark}

	\section{SI-EXP3: An Algorithm for Minimizing Internal Sleeping Regret} 
\label{sec:alg_internal}
We now consider the problem setting of Sleeping MAB \cref{sec:prob} and propose an EXP3 based algorithm that is shown to yield sublinear sleeping regret guarantee. It is worth noting that, our algorithm applies to the hardest setting of adversarial losses and availabilities, which clearly subsumes the stochastic settings (losses or availabilities) as a special case: As proved in \cref{thm:internal}, our proposed algorithm SI-EXP3 achieves $\tilde O(\sqrt{KT})$ internal regret for any arbitrary sequence of losses $\{\ell_t\}_{t \in [T]}$ and availabilities $\{S_t\}_{t \in [T]}$. 
 Further the generalizability of our internal regret (see \cref{fig:rel}) implies $O(\sqrt{T})$ external regret at any setting and also ordering regret for i.i.d losses, as detailed in \cref{rem:reg_implication}.

Our regret analysis is inspired from the construction of~\cite{stoltz2005incomplete} (see Section 3, Thm. 3.2) for the internal regret although the `sleeping component' or item non-avilabilities is not considered. Another relevant work is~\cite{blum2007external}, which designs an algorithm minimizing a variant of internal sleeping regret with a subtle difference: it considers time selection functions $I \in \mathcal{I} \subseteq \{0,1\}^T$ instead of sleeping actions. More precisely, the regret considered by \cite{blum2007external} is of the form\vspace*{-7pt}
$$
    \qquad \max_{I \in \mathcal{I}} \sum_{t=1}^T \big(\ell_t(k_t) - \ell_t(j)\big) \indic\{k_t = i, j \in I(t)\}\,.
$$
Thus $R_T^{\text{int}}(i\to j)$ would correspond to the choice $I(t) = \indic\{j \in S_t\}$, but the dependence on the arm $j$ is not possible in their definition and makes the adaptation of their algorithm challenging. Furthermore, their regret bound (Thm. 18) only holds in the full information setting. It may be adapted to bandit feedback, but would yield a suboptimal regret $\smash{O(K \sqrt{TK \log K})}$ (which is the internal regret upper-bound they obtain in Thm. 11 without the sleeping component) in comparison with \cref{thm:internal}. 
We now describe our main algorithm, SI-EXP3, for optimizing Internal Sleeping Regret.


\subsection{Algorithm: SI-EXP3} 
The Sleeping-Internal-EXP3 (SI-EXP3) procedure is a two-level algorithm. At round $t\geq 1$, the master algorithm forms a probability vector $p_t \in \Delta_K$ over the arms, which is used to sample the played action $k_t \sim p_t$. The vector $p_t$ is such that $p_t(i) = 0$ for any $i \notin S_t$. A subroutine, based on EXP3~\cite{Auer+02}, combines $K(K-1)$ sleeping experts indexed by $i \to j$, for $i \neq j$. Each expert aims to minimize the internal sleeping regret $R_T^{\text{int}}(i\to j)$. We detail below how to construct $p_t$. 


\begin{algorithm}[h]
	\caption{SI-EXP3: Sleeping Internal Regret Algorithm for MAB}
	\label{alg:siexp}
	\begin{algorithmic}[1]	
		\STATE {\bfseries input:} Arm set: $[K]$, learning rate $\eta > 0$
            \STATE {\bfseries init:} $E := \{(i,j) \in [K]^2, i\neq j\}$\\
            \phantom{{\bfseries init: }}$\tilde q_1 \in \Delta_E$ uniform distribution on $E$
		\FOR{$t = 1, 2, \ldots, T$}
            \STATE Observe $S_t \subseteq [K]$ and define $q_t \in \Delta_E$ as in~\eqref{eq:defqt}
		\STATE Define $p_t \in \Delta_K$ by solving~\eqref{eq:defpt}
            \STATE Predict $k_t \sim p_t$ and observe $\ell_t(k_t)$
            \STATE Define $\hat \ell_t(k) = \frac{\ell_t(k)}{p_t(k)} \indic\{k = k_t\}$ for all $k \in [K]$
            \FOR{$(i,j) \in E$}
            \STATE Define $p_t^{i \to j} \in \Delta_K$ as in~\eqref{eq:defptijk}
            \STATE Define $\hat \ell_t(i\to j)$ as in~\eqref{eq:defhatellij}
            \ENDFOR
		\STATE Update $\tilde q_{t+1}(i \to j) \propto \tilde q_t(i \to j) e^{- \eta \hat \ell_t(i\to j)}$
		\ENDFOR  
	\end{algorithmic} 
\end{algorithm} 

For any $i\neq j$, we denote by $p_t^{i\to j} \in \Delta_K$ the probability vector that moves the weight of  $p_t$ from $i$ to $j$, \vspace*{-5pt} 
\begin{equation}
    \label{eq:defptijk}
    p_t^{i\to j}(k) = \left\{
    \begin{array}{cc}
        0 & \text{if } k= i \\
        p_t(i) + p_t(j) & \text{if } k = j \\
        p_t(k) & \text{otherwise}
    \end{array} \right. \,.
\end{equation}
As usually considered in adversarial multi-armed bandits, for any active arm $i \in S_t$, we define the associated estimated loss $\hat \ell_t(i) := \ell_t(i)  \indic\{ i = k_t \}/p_t(i)$. 
Furthermore, by abuse of notation, we also define for any $i,j \in [K]$, $i\neq j$ the loss \vspace*{-5pt}
\begin{equation}
    \label{eq:defhatellij}
    \hat \ell_t\big( i \to j \big) := \left\{ 
    \begin{array}{ll}
        \sum_{k=1}^K p_t^{i \to j}(k) \hat \ell_t(k) & \text{if } j \in S_t \\
         \ell_t(k_t) & \text{otherwise}
    \end{array} \right.
\end{equation}
The subroutine then computes the exponential weighted average of the experts $i \to j$, by forming the weights \vspace{-7pt}
\begin{equation}
    \hspace*{-10pt} \tilde q_t(i\to j) := \frac{\exp\Big( - \eta \sum_{s=1}^{t-1}  \hat \ell_s (i\to j) \Big)}{\sum_{i'\neq j '}\exp\Big( - \eta \sum_{s=1}^{t-1}  \hat \ell_s(i' \to j') \Big) } \,.
    \label{eq:deftildeqt}
\end{equation}
To avoid assigning mass from an active item $i$ to an inactive $j \notin S_t$, the subroutine then normalizes those weights so that sleeping experts get $0$ mass
\begin{equation}
    \label{eq:defqt}
     q_t(i \to j) := \frac{\tilde q_t(i \to j) \indic\{j \in S_t\} }{\sum_{i'\neq j'} \tilde q_t(i'\to j') \indic\{j' \in S_t\}} \,.
\end{equation}
Finally, the master algorithms forms $p_t \in \Delta_K$ by solving the linear system
\begin{equation}
    \label{eq:defpt}
    p_t = \sum_{i \neq j} p_t^{i \to j} q_t(i \to j) \,.
\end{equation}
The existence and the practical computation of such a $p_t$ is an application of Lemma 3.1 of~\cite{stoltz2005incomplete}.

\subsection{Regret Analysis of SI-EXP3} 
We now analyze the sleeping internal regret guarantee of \cref{alg:siexp} (\cref{thm:internal}), and also the implications to other notions of sleeping regret (\cref{rem:reg_implication}).

\begin{theorem} 
\label{thm:internal}
Consider the problem of Sleeping MAB for arbitrary (adversarial) sequences of losses $\{\ell_t\}$ and availabilities $\{S_t\}$. Let $T \geq 1$ and $\smash{\eta^2 = (\log K)/ \big(2 \sum_{t=1}^T |S_t|\big)}$. Assume that $0\leq \ell_t(i)\leq 1$ for any $i \in S_t$ and $t \in [T]$. Then, \vspace*{-7pt}
\[
    \E\big[R_T^{\text{int}}(i\to j)\big] \leq 2  \sqrt{2 \log K \sum_{t=1}^T |S_t|} \leq 2 \sqrt{2 T K \log K} \,,
\]
for all $i \neq j$ in $[K]$.
\end{theorem}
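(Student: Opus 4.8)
The plan is to analyze the EXP3 subroutine of SI-EXP3 as a plain exponential-weights forecaster over the $K(K-1)$ ``internal-move'' experts $i\to j$, and then transfer its guarantee to the master player through the fixed-point identity~\eqref{eq:defpt}. Two structural observations drive everything. First, since $\hat\ell_t(k)=\ell_t(k)\indic\{k=k_t\}/p_t(k)$ is supported on the single arm $k_t$, we have $\sum_k p_t(k)\hat\ell_t(k)=\ell_t(k_t)$, and hence by~\eqref{eq:defpt}, $\sum_{i\neq j}q_t(i\to j)\sum_k p_t^{i\to j}(k)\hat\ell_t(k)=\sum_k p_t(k)\hat\ell_t(k)=\ell_t(k_t)$. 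Second --- and this is exactly why the second branch of~\eqref{eq:defhatellij} is set to $\ell_t(k_t)$ --- writing $Z_t$ for the normalizer in~\eqref{eq:defqt}, the awake experts contribute $Z_t\,\ell_t(k_t)$ to $\sum_{i\neq j}\tilde q_t(i\to j)\hat\ell_t(i\to j)$ and the asleep ones contribute $(1-Z_t)\ell_t(k_t)$, so the $\tilde q_t$-mixture of the surrogate losses equals the player's realized loss $\ell_t(k_t)$.

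Given this, I would run the standard log-sum-exp potential argument on the weights~\eqref{eq:deftildeqt}. Since every surrogate loss is nonnegative, the inequality $e^{-x}\le 1-x+\frac{1}{2}x^2$ and a telescoping of $\Phi_t=-\frac{1}{\eta}\log\big(\frac{1}{|E|}\sum_{i\neq j}e^{-\eta\sum_{s\le t}\hat\ell_s(i\to j)}\big)$, combined with the mixture identity above, yield for every fixed pair $(i,j)$:
\[
\sum_{t=1}^T\ell_t(k_t)-\sum_{t=1}^T\hat\ell_t(i\to j)\ \le\ \frac{\log|E|}{\eta}+\frac{\eta}{2}\sum_{t=1}^T\sum_{i'\neq j'}\tilde q_t(i'\to j')\,\hat\ell_t(i'\to j')^2 .
\]
Next I would unfold the comparator: split $\sum_t\hat\ell_t(i\to j)$ according to whether $j\in S_t$; on rounds with $j\notin S_t$ the term equals $\ell_t(k_t)$ and cancels, so the left-hand side collapses to $\sum_{t:\,j\in S_t}\big(\ell_t(k_t)-\sum_k p_t^{i\to j}(k)\hat\ell_t(k)\big)$. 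Since $p_t$ and $p_t^{i\to j}$ differ only in coordinates $i$ and $j$ (by $\pm p_t(i)$, from~\eqref{eq:defptijk}), this difference equals $p_t(i)\big(\hat\ell_t(i)-\hat\ell_t(j)\big)$. Taking conditional expectations and using $\E[\hat\ell_t(k)\mid\cF_{t-1}]=\ell_t(k)$ whenever $p_t(k)>0$ (and that the term vanishes when $p_t(i)=0$), the left-hand side becomes $\sum_{t:\,j\in S_t}\E\big[p_t(i)(\ell_t(i)-\ell_t(j))\big]=\E[R_T^{\text{int}}(i\to j)]$, because $p_t(i)(\ell_t(i)-\ell_t(j))=\E[(\ell_t(k_t)-\ell_t(j))\indic\{k_t=i\}\mid\cF_{t-1}]$.

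It remains to bound the variance term by $O\big(\sum_t|S_t|\big)$, and this is the step I expect to be the main obstacle. The asleep part is $\le\E[\ell_t(k_t)^2]\le 1$. For the awake part, $\tilde q_t(i'\to j')\le q_t(i'\to j')$ and $\E[\hat\ell_t(i'\to j')^2\mid\cF_{t-1}]=\sum_k\frac{(p_t^{i'\to j'}(k))^2}{p_t(k)}\ell_t(k)^2\le\frac{p_t(i')^2}{p_t(j')}+O(1)$ by~\eqref{eq:defptijk}. The key is a flow-balance identity for the fixed point~\eqref{eq:defpt}, obtained by summing~\eqref{eq:defpt} coordinate-wise and cancelling the diagonal contributions: for every $j'$, $\sum_{i'\neq j'}p_t(i')q_t(i'\to j')=p_t(j')\sum_{k\neq j'}q_t(j'\to k)\le p_t(j')$. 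Using $p_t(i')\le 1$, this gives $\sum_{i'\neq j',\,j'\in S_t}q_t(i'\to j')\frac{p_t(i')^2}{p_t(j')}\le\sum_{j'\in S_t}\frac{1}{p_t(j')}\sum_{i'\neq j'}q_t(i'\to j')p_t(i')\le|S_t|$, while the lower-order pieces contribute $O(|S_t|)$ since $|S_t|\ge 1$; this step also relies on the fixed point $p_t$ existing and satisfying $p_t(j)>0$ for all $j\in S_t$ (Lemma 3.1 of~\cite{stoltz2005incomplete}), which is precisely what makes the importance weights well defined. Plugging $\sum_t\E[\cdots]\le O\big(\sum_t|S_t|\big)$ into the displayed bound, substituting $\eta^2=(\log K)/\big(2\sum_t|S_t|\big)$, accounting for the constants, and bounding $\sum_t|S_t|\le TK$ then gives the stated bound. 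The non-variance part is essentially the internal-regret-via-exponential-weights template of~\cite{stoltz2005internal,stoltz2005incomplete} with the ``asleep expert $=$ realized loss'' bookkeeping layered on top; the genuinely new ingredient is the flow-balance estimate, which is what makes the $1/p_t(j')$ factors cancel.
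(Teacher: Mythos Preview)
Your proof is correct and follows the same overall architecture as the paper's: apply the exponential-weights regret bound to the $K(K-1)$ experts $i\to j$ with surrogate losses $\hat\ell_t(i\to j)$; use the fixed-point identity~\eqref{eq:defpt} together with the ``asleep $=\ell_t(k_t)$'' convention to show that the $\tilde q_t$-mixture of surrogate losses equals $\ell_t(k_t)$; and unfold the comparator so that, in expectation, the left-hand side becomes $\E[R_T^{\text{int}}(i\to j)]$. These steps coincide with the paper almost line for line.

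The one substantive difference is how the second-moment term is controlled. The paper first applies Jensen (equivalently $p_t^{i\to j}(k_t)^2\le p_t^{i\to j}(k_t)$, since only the $k_t$-summand survives) to obtain $\hat\ell_t(i\to j)^2\le\sum_k p_t^{i\to j}(k)\hat\ell_t(k)^2$, and \emph{then} averages over $q_t$ so that the fixed-point identity~\eqref{eq:defpt} applies directly to this \emph{linear} form, giving $\tilde Q_t\sum_k p_t(k)\hat\ell_t(k)^2$, whose conditional expectation is $\tilde Q_t\sum_{k\in S_t}\ell_t(k)^2\le|S_t|$. You instead take the conditional expectation first, exposing $(p_t(i')+p_t(j'))^2/p_t(j')$, and then invoke the flow-balance identity $\sum_{i'\neq j'}p_t(i')q_t(i'\to j')=p_t(j')\sum_{k\neq j'}q_t(j'\to k)$ --- which indeed follows from~\eqref{eq:defpt}, being the stationarity characterization of $p_t$ for the $q_t$-Markov chain --- to cancel the $1/p_t(j')$. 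Both routes are valid; the paper's is shorter because it linearizes before dividing, so it never meets a $1/p_t(j')$ at all, whereas your flow-balance step is a correct but heavier tool. Two small remarks: your flow-balance sum is in fact $\le 1$ (since $\sum_{j'}\sum_{k\neq j'}q_t(j'\to k)=1$), not merely $\le |S_t|$; on the other hand, the ``$O(1)$ lower-order pieces'' in your decomposition make the final constant somewhat larger than the paper's $2\sqrt{2}$, though the order $O\big(\sqrt{\log K\sum_t|S_t|}\big)$ is of course identical.
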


The proof is postponed to \cref{app:proof_thm}. The learning rate $\eta$ can be calibrated online at the price of a small multiplicative constant by choosing a time-varying learning rate $\smash{\eta_t^2 = (\log K)/ \big(2 \sum_{s=1}^t |S_s|\big)}$ or by using the doubling-trick technique~\cite{cesa2006prediction}.

\begin{remark}
	\label{rem:reg_implication}
 The above theorem also implies a bound of order $O(K\sqrt{KT \log K})$ for the external sleeping regret by~\cref{lem:ext_int} and $O(K^2 \sqrt{KT \log K})$ for the ordering regret when the losses are i.i.d. by~\cref{lem:implication}. SI-EXP3 is the first to simultaneously achieve order-optimal \emph{sleeping external regret} for fully adversarial setup as well as \emph{ordering regret} for stochastic losses and this was possible owning to the versatility of \emph{Sleeping Internal Regret} as summarized in \cref{fig:rel}.
\end{remark}

	\section{Implications: Better Regret for Sleeping Dueling Bandits}
\label{sec:dueling} 

In this section, we show the implication of our result to sleeping dueling bandits.

\textbf{Motivation behind a generalized DB objective. } We show interesting use-cases of our generalization such as when the user is asked at each round to choose two different actions $i \neq j$. Note that in the dueling bandit literature, the user may choose replicated arms $(i,i)$, and is expected to converge to an optimal pair $(i^*,i^*)$. However, in many applications, this does not make sense to show users the same pair of items $(i,i)$, rather it might be preferred to see their top-two choices, i.e we would expect the algorithm to converge to the best pair $(i,j)$, $i\neq j$ (more motivating examples are provided after \cref{rem:gen_db_reg}). Classical dueling bandit algorithms do not easily allow for such a restriction, whereas this can be easily achieved with our sleeping procedure.

\subsection{Our Problem Setting: Sleeping Dueling Bandits (Sleeping DB)}
\label{sec:si_db}

We generalize the setting of for dueling bandits with adversarial sleeping  of~\cite{saha2021dueling}. We consider the stochastic dueling bandit setting with a preference matrix $P \in [0,1]^{K\times K}$ such that $P(i,j) = 1 - P(i,j)$ is the probability of item $i$ to beat item $j$ in some round. Furthermore, we assume that their exists a total ordering of the arms $\sigma$, such that $P(\sigma(i),j) \geq  P(\sigma(i'),j)$ for all $\sigma(i) \leq \sigma(i')$ and all $j \in [K]$. This is for instance satisfied for utility-based preference matrices \cite{ailon2014reducing}, or for Plackett-Luce model \cite{azari2012random}, where it is assumed that the $K$ items are associated to positive
score parameters $\theta_1,\dots,\theta_K$ and $P(i, j) = \theta_i/(\theta_i+\theta_j)$ for all $i, j \in [K]$.

Before each round $t\geq 1$, an adversary reveals a set of possible dueling pairs $A_t \subseteq [K]^2$. We assume that if $(i,j) \in A_t$ then $(j,i) \in A_t$. Furthermore, we denote for any $i \in [K]$ by 
$
A_t(i) := \{j \in [K], (i,j) \in A_t\},
$ 
the set of possible adversaries for $i \in [K]$, and as usual by $S_t := \{i \in [K], A_t(i) \neq \emptyset\}$ the set of available arms at time $t$. 
After observing $A_t$, the learner selects a pair of items $(i_t,j_t) \in A_t$ and observes the result of the duel $o_t(i_t,j_t)$ which follows a Bernoulli distribution with mean $P(i_t,j_t)$. 

\textbf{Performance: Internal Sleeping DB regret.} We measure the learner's regret w.r.t. the following regret measure: \vspace*{-5pt}
\begin{equation}
\label{eq:regdb}
   R_T^{\texttt{SI-DB}} \!\! =  \frac{1}{2} \sum_{t=1}^T \Big( \underset{j^* \in A_t(i_t)}{\max} \!\! P(j^*, i_t) + \!\! \underset{i^* \in A_t(j_t)}{\max} \!\! P(i^*, j_t) - 1 \Big) \,.
\end{equation}
Since the definition is inspired from internal regret, we term it as Internal Sleeping DB regret (or \texttt{SI-DB} in short) --- the measure essentially evaluates the dueling choices of the learner $(i_t,j_t)$ against their best `available competitor' according to $A_t(\cdot)$. 

\begin{remark}[Generalizability of $R_T^{\texttt{SI-DB}}$]
\label{rem:gen_db_reg}
It is noteworthy that if all pairs are available $A_t = [K]^2$, then $i_t^* = j_t^*$ is the Condorcet Winner (CW) (see \cite{Zoghi+14RUCB} for definition) for all rounds since $\P$ respects total ordering. Thus, in this case, $R_T^{\texttt{SI-DB}}$ reduces to the standard CW-regret studied in DB~\cite{Yue+09,Zoghi+14RUCB,Busa21survey}. Moreover, $R_T^{\texttt{SI-DB}}$ also generalizes the notion of Sleeping Dueling Bandit of \cite{saha2021dueling} for the special case $A_t = S_t \times S_t$ (i.e. when all pairs of the available items are feasible): This is since for this case we again have $i_t^* = j_t^*$ (owing to the total ordering assumption of $\P$), and hence we can recover their notion of sleeping regret (see Eqn. (1) of \cite{saha2021dueling}). Nevertheless, our new notion offers more flexibility as we now show with some application examples. 
\end{remark}

\textbf{Motivating Examples: Practicability of $R_T^{\texttt{SI-DB}}$ . }\\[2pt]
\textbf{ (i.) Dueling bandits with non-repeating arms.} A first example consists in choosing $A_t = \{(i,j) \in [K]^2, i \neq j\}$. Our algorithm will be forced to play two different items at every round. Such a restriction is new to dueling bandits although it makes sense in many applications, such as recommendation systems in which we may want to suggest pairs of different items. Our algorithm will converge to the best pair. An interesting question for future work is to generalize our strategy to any size $M$ (possibly larger than 2) of subsets of unique battling items. A similar setting was considered by \cite{saha2018battle} but they allow choosing replicate items.
\\[2pt]
\textbf{ (ii). Preference learning with categories.} Another example comes from an application in which the arms may be grouped into different categories (or teams). For example, one may thing of a recommendation system for movies. The latter could be action movies, documentaries, TV series, or romantic movies. The system could be asked to suggest at every round movies from different categories to provide diversity into the suggestion. Our algorithm would simultaneously learn the best movies but also the best two categories. In addition, the possibility of sleeping allows the film collection to vary over time. 

\subsection{Sparring SI-EXP3: An Algorithm for Sleeping DB and Regret Analysis}
Following the generic reduction from multi-armed bandit to dueling bandit from~\cite{saha2022versatile} (see Section $4$), we consider the following algorithm. We run two versions $\cA^{\text{left}}$ and $\cA^{\text{right}}$ of the internal sleeping regret algorithm of \cref{thm:internal} in parallel.  
At each round $t$, $i_t$ is chosen by $\cA^{\text{left}}$, which is run on the availability sets $S_t$ and losses $\ell_t^{\text{left}}(k) = o_t(j_t, k)$, $k \in S_t$. After $i_t$ is chosen, $\cA^{\text{right}}$ chooses $j_t$, by using the availability sets $A_t(i_t)$ and losses $\ell_{t}^{\text{right}}(k) = o_t(i_t, k)$. 
We call the algorithm as \emph{Sparring SI-EXP3}, following the classical nomenclature from \cite{ailon2014reducing} which first invented the idea of designing a DB algorithm by making two MAB algorithms competing against another, and famously termed it as `\emph{Sparring}'.

\begin{theorem} \label{thm:dueling}
Consider the problem setting of Sleeping DB defined above (\cref{sec:dueling}) and let $T \geq 1$. Then, Sparring SI-EXP3 satisfies \vspace*{-5pt}
\[
    \E[R_T^{\texttt{SI-DB}}] \leq 2 K^2 \sqrt{2 T K \log K} \,.
\]
\end{theorem}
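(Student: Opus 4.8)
The starting point is a decomposition of $R_T^{\texttt{SI-DB}}$ into two ``ordering-regret-type'' quantities, one for each sparring copy. Writing $\sigma(A)$ for the arm of smallest $\sigma$-rank in a set $A$ (so that, as in the paper, $\sigma(A)$ is the best available arm), the total-ordering assumption on $P$ gives $\max_{j^*\in A_t(i_t)} P(j^*,i_t) = P\big(\sigma(A_t(i_t)),i_t\big) = 1 - P\big(i_t,\sigma(A_t(i_t))\big)$, and symmetrically for the second maximum. Plugging this in and using $P(i_t,j_t)+P(j_t,i_t)=1$ to absorb the constant $-1$, one obtains
\begin{align*}
    2 R_T^{\texttt{SI-DB}} \;=\; \underbrace{\sum_{t=1}^T \Big( P(i_t,j_t) - P\big(i_t,\sigma(A_t(i_t))\big)\Big)}_{=:\,\rho_T^{\text{right}}} \;+\; \underbrace{\sum_{t=1}^T \Big( P(j_t,i_t) - P\big(j_t,\sigma(A_t(j_t))\big)\Big)}_{=:\,\rho_T^{\text{left}}}\,,
\end{align*}
so that the cross terms $P(i_t,j_t)$ and $P(j_t,i_t)$ are exactly the two halves of the learner's ``loss'' that cancel. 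Here $\rho_T^{\text{right}}$ is the (mean-loss) ordering regret against the fixed ordering $\sigma$ of $\cA^{\text{right}}$ on the sleeping-MAB instance with availability sets $A_t(i_t)$ and Bernoulli losses $o_t(i_t,\cdot)$ of conditional mean $P(i_t,\cdot)$; likewise $\rho_T^{\text{left}}$ is the ordering regret of $\cA^{\text{left}}$, and since $i_t \in A_t(j_t) \subseteq S_t$ and the rank of $\sigma(S_t)$ is at most that of $\sigma(A_t(j_t))$, replacing $\sigma(A_t(j_t))$ by $\sigma(S_t)$ only increases $\rho_T^{\text{left}}$, so we may work with $\cA^{\text{left}}$'s own availability sets $S_t$.

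The plan is then to bound $\E[\rho_T^{\text{right}}]$ and $\E[\rho_T^{\text{left}}]$ by sums of internal sleeping regrets, repeating the argument behind \cref{lem:implication} with the preference total order $\sigma$ playing the role that the ordering of the loss means played there. Concretely, since $\sigma(A_t(i_t))$ minimises $P(i_t,\cdot)$ over $A_t(i_t)$ and $j_t \in A_t(i_t)$, the total-ordering property gives $P(i_t,j_t) - P(i_t,\sigma(A_t(i_t))) \ge 0$ together with
\[
    P(i_t,j_t) - P\big(i_t,\sigma(A_t(i_t))\big) \;\le\; \sum_{b\,:\,\sigma^{-1}(b) \le \sigma^{-1}(j_t)} \big(P(i_t,j_t) - P(i_t,b)\big)\,\indic\{b \in A_t(i_t)\}\,,
\]
with all summands non-negative. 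Summing over $t$ and grouping by $(j_t,b)$ expresses $\E[\rho_T^{\text{right}}]$ as a sum, over the $\binom{K}{2}$ pairs $(a,b)$ with $\sigma^{-1}(b)<\sigma^{-1}(a)$, of terms $\E\big[\sum_t (P(i_t,a)-P(i_t,b))\indic\{j_t=a,\,b\in A_t(i_t)\}\big]$ (pairs with $a=b$ contributing $0$). Since the duel outcomes $o_t(i_t,\cdot)$ are drawn fresh conditionally on everything up to the choice of $(i_t,j_t)$, a short conditioning argument identifies each such term with $\E\big[R_T^{\text{int}}(a\to b)\big]$ for $\cA^{\text{right}}$ run on the realized Bernoulli losses, and \cref{thm:internal} (with $\sum_t |A_t(i_t)| \le TK$) bounds it by $2\sqrt{2TK\log K}$. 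Hence $\E[\rho_T^{\text{right}}] \le \binom{K}{2}\cdot 2\sqrt{2TK\log K}$, symmetrically $\E[\rho_T^{\text{left}}] \le \binom{K}{2}\cdot 2\sqrt{2TK\log K}$, and dividing by $2$ gives $\E[R_T^{\texttt{SI-DB}}] \le \binom{K}{2}\cdot 2\sqrt{2TK\log K} \le 2K^2\sqrt{2TK\log K}$.

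The step that requires genuine care — and the main obstacle — is the application of \cref{thm:internal} inside the sparring loop. For $\cA^{\text{right}}$ it is immediate: $i_t$, which fixes both the availability set $A_t(i_t)$ and the loss vector $o_t(i_t,\cdot)$ that $\cA^{\text{right}}$ faces at round $t$, is determined before $\cA^{\text{right}}$ acts, so $\cA^{\text{right}}$ really does face an adversarial sleeping-MAB instance; the Bernoulli noise enters only through conditionally unbiased importance-weighted loss estimates and leaves the variance terms of the analysis of \cref{thm:internal} unchanged. For $\cA^{\text{left}}$, however, the loss vector $o_t(j_t,\cdot)$ depends on $j_t$ and hence on $i_t$, i.e.\ on $\cA^{\text{left}}$'s own current action — an adaptive coupling not covered by the oblivious phrasing of \cref{thm:internal}. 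Checking that SI-EXP3's internal-regret guarantee survives this coupling in expectation is exactly what the generic MAB-to-DB reduction of \cite{saha2022versatile} is designed to handle, and we invoke it to transfer the bound to $\cA^{\text{left}}$; this is the only non-routine part of the argument, the rest being the bookkeeping sketched above.
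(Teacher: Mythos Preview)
Your proof follows the paper's almost line for line: the same $P(i,j)+P(j,i)=1$ decomposition of $R_T^{\texttt{SI-DB}}$ into a left and a right piece, the same \cref{lem:implication}-style passage from each piece to a sum of internal sleeping regrets via the total order on $[K]$, and the same invocation of \cref{thm:internal} for every $(a\to b)$ term. Your count of $\binom{K}{2}$ pairs per side is marginally sharper than the paper's $\sum_i |D_i|\le K^2$, but this does not affect the stated bound.

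Where you go beyond the paper is in flagging the adaptive coupling for $\cA^{\text{left}}$. The paper simply writes ``by \cref{thm:internal}'' without further comment, and your concern is legitimate: $\cA^{\text{right}}$'s availability set is $A_t(i_t)$, so the distribution of $j_t$ --- and hence the loss vector $\ell_t^{\text{left}}=o_t(j_t,\cdot)$ --- genuinely depends on $\cA^{\text{left}}$'s current action $i_t$, which violates the $\cF_{t-1}$-measurability of $\ell_t$ used in the proof of \cref{thm:internal}. Your pointer to the reduction of~\cite{saha2022versatile} is the natural instinct, but note that in that paper there is no sleeping, so $\cA^{\text{right}}$'s sampling distribution is fixed before $i_t$ is drawn and the coupling vanishes; here it is precisely the dependence of $A_t(i_t)$ on $i_t$ that creates the difficulty, and that feature is absent from~\cite{saha2022versatile}. (In the special case $A_t=S_t\times S_t$ of~\cite{saha2021dueling}, one has $A_t(i_t)=S_t$ independently of $i_t$ and the issue disappears.) So your identification of the obstacle is in fact sharper than the paper's own treatment, but the resolution you invoke would still need additional argument for general $A_t$.
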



The proof is postponed to \cref{app:db}. As explained in \cref{rem:gen_db_reg}, by choosing $A_t$ of the form $S_t \times S_t$ for some subset $S_t \subseteq [K]$, we retrieve the setting of~\cite{saha2021dueling}. Note that they provide distribution dependent upper-bounds while we present worst-case upper-bound. They show a high-probability regret bound of order $O(K^2 \log(1/\delta) /\Delta^2)$ for an UCB based algorithm, and a $O(K^3/\Delta^2 + K^2 \log(T)/\Delta)$ upper-bound on the expected regret of an algorithm based on empirical divergences. Their analysis are quite technical and non-trivial to adapt to general sets $A_t$ as our result. Furthermore , both their algorithms yield a worst-case regret of order $\smash{O(T^{2/3})}$ while we only suffer $\smash{O(\sqrt{T})}$.



	\section{Experiments}
\label{sec:expts}

\begin{figure*}[!t]
    \centering
    \begin{minipage}{.3\textwidth}
        \includegraphics[width=\textwidth]{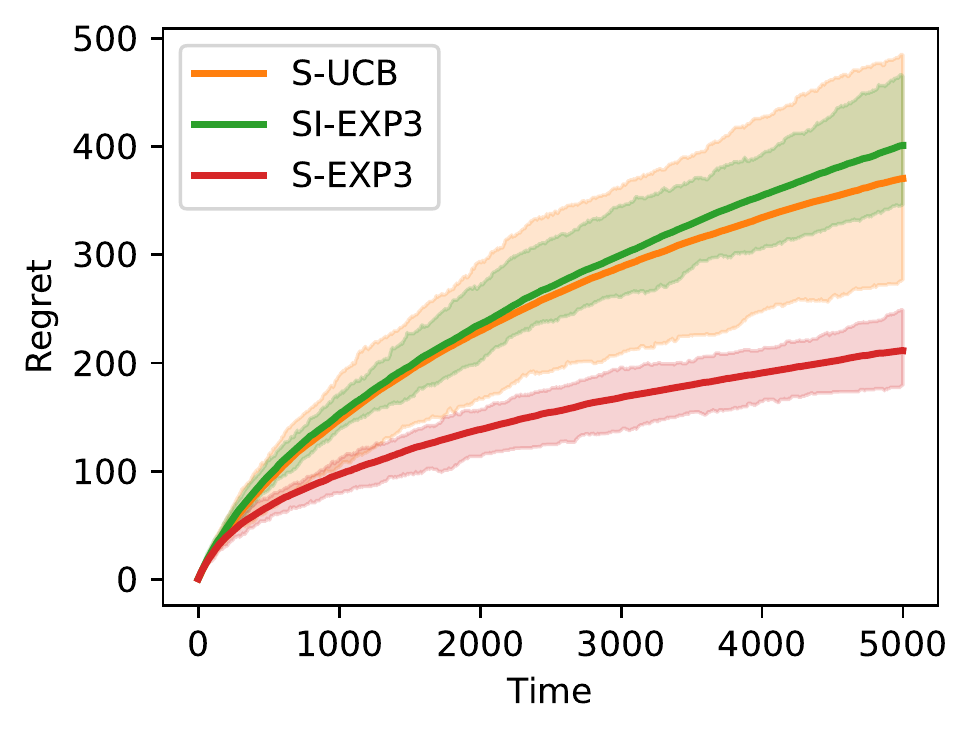}
    \end{minipage}\quad 
    \begin{minipage}{.3\textwidth}
        \includegraphics[width=\textwidth]{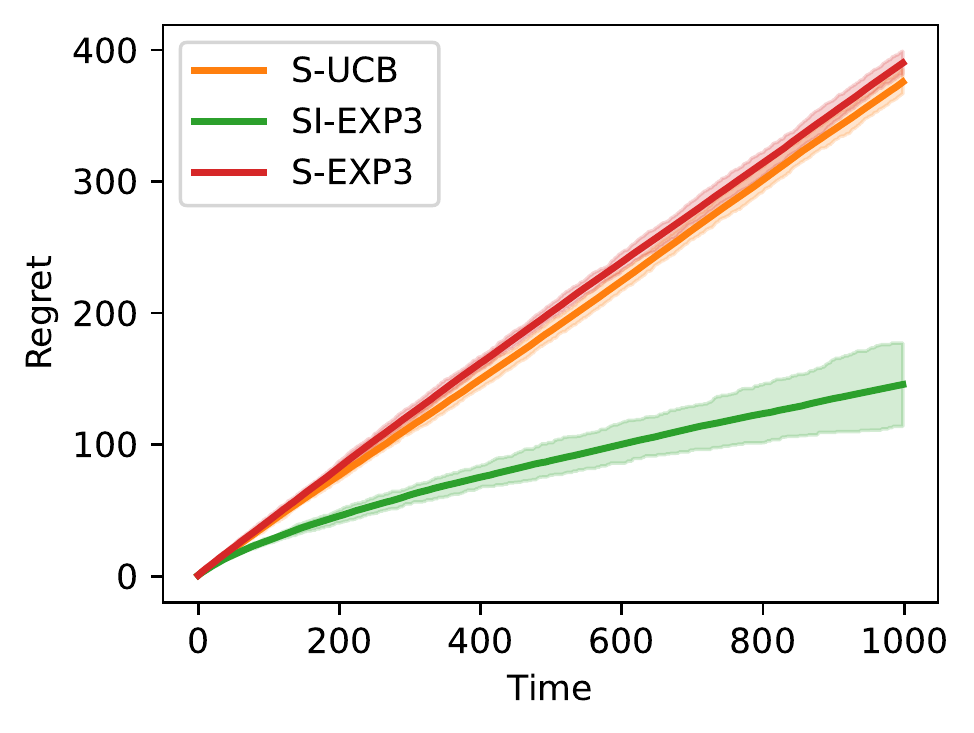}
    \end{minipage}    
    \begin{minipage}{.3\textwidth}
        \includegraphics[width=\textwidth]{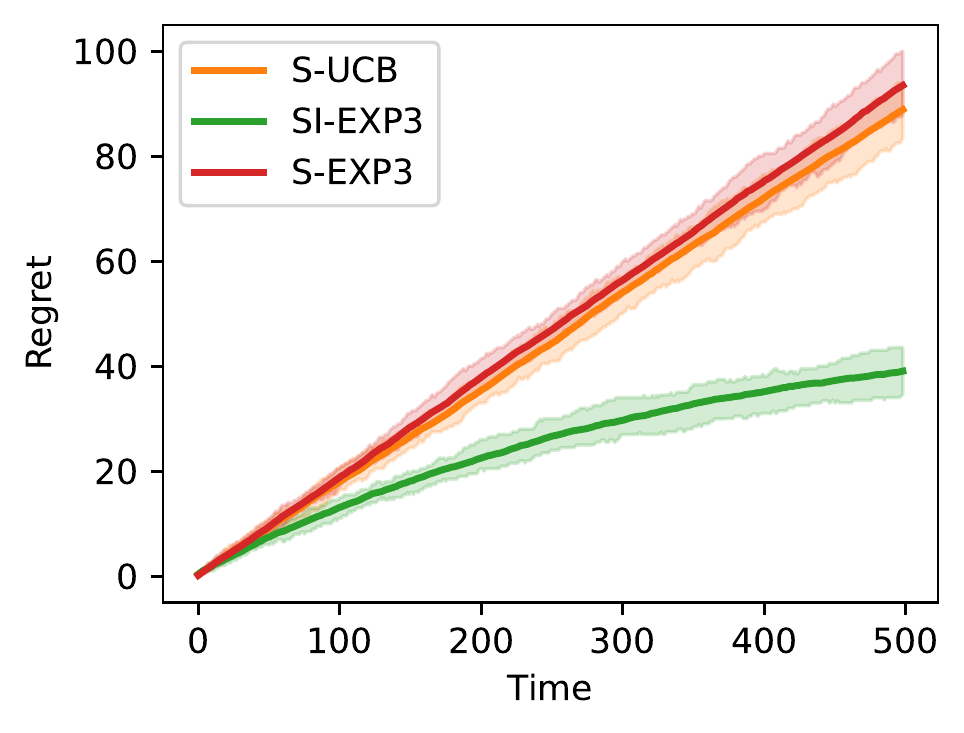}
    \end{minipage}
    \caption{[Left] Stochastic environment [Middle] Dependent environment [Right] Rock Paper Scissors}
    \label{fig:exp1}
\end{figure*}


We provide synthetic experiments in sleeping multi-armed bandits. In all the experiments, we plot the policy regret $R_T^{policy}$ for MAB~\eqref{eq:reg}. All experiments are averaged across $50$ runs. Further experiments (including some in the dueling setup) are provided in \cref{app:expts}.
We compare the results of the following algorithms: 
\begin{itemize}[nosep,leftmargin=*,topsep=-\parskip]
    \item SI-EXP3: \cref{alg:siexp}; 
    \item S-UCB: A sleeping UCB procedure~\cite{kleinberg+10} for ordering regret with stochastic losses;
    \item S-EXP3: Sleeping-EXP3G ~\cite{saha2020improved} for ordering regret with adversarial losses and i.i.d. sleeping.
\end{itemize}
We compare to these two algorithms because they achieve state-of-the-art performance in their respective settings. The hyper-parameters $\eta$ of SI-EXP3 and $(\eta,\lambda)$ of S-EXP3 are considered as time-varying hyper-parameters and set to $t^{-1/2}$.

\paragraph{Stochastic environment.} The losses and availabilies for $K = 10$ arms are i.i.d. and respectively follow Bernoulli distributions with mean $\mu_k$ and $a_k$. The latter are uniformly sampled at the start of each run on $(0,1)$. Rounds with no available arms are skipped. 

\paragraph{Dependent environment.} We consider the following semi-stochastic environment with $K=3$. The pairs $(S_t,\ell_t)$ are still i.i.d. but the losses $\ell_t$ depend on the availabilities. The sets $S_t$ are first uniformly sampled among $\{1,2\},\{1,2,3\},\{1,3\}$ and $\{2,3\}$. According to the values of $S_t$, the loss vectors are then respectively $(0,.5,x),(0,.5,1),(1,x,0),$ and $(x,0,1)$, where $x$ means that the arm is sleeping. 

\paragraph{Rock-Paper-Scissors.} We consider a repeated two-player zero-sum game with payoff matrix \vspace*{-5pt}
\[
    P = \left(
    \begin{smallmatrix}
        0 & 1 & -1 \\
        -1 & 0 & 1 \\
        1 & -1 & 0
    \end{smallmatrix}
    \right) \,.\vspace*{-5pt}
\]
We assume that at the start of each round some action may be unavailable ($S_t$ is uniformly sampled as in the previous environment). The game is then played on $S_t$ only. We consider an opponent that is playing the Nash equilibrium of the sub-games (i.e., the game with the payoff matrix $P$ restricted to $S_t$) and run each algorithm against that opponent. 

\paragraph{Results.}
The cumulative regrets are provided in \cref{fig:exp1}. We find that as soon as there are dependencies between the loss vectors and availabilities, SI-EXP3 significantly outperforms the other two algorithms. This is not surprising: S-EXP3 and S-UCB were indeed designed to perform well with respect to the best fixed ordering. Typically, they first order the actions with respect to their average performance on all rounds, and play the best action that is available in $S_t$. In case of dependency between $S_t$ and $\ell_t$, the best ordering may vary over the rounds and S-UCB and S-EXP3 incur linear regret. Note that this situation can happen often in real life. An example is an internet sales site that wants to offer products. Some products are useless if others are out of stock. For instance, it is less interesting to offer the products of a cooking recipe if some of the ingredients are not available. 

\begin{remark}[Application to game theory with sleeping actions] \label{rem:nash}
For sleeping two-player zero-sum games, the best policy to play depends on the actions available to the opponent: if Scissors is not available, then Paper is the best action, although when all actions are available the optimal policy is $(1/3,1/3,1/3)$. For instance, the Nash equilibrium of $P$ restricted to $S_t = \{1,2\}$ (Rock, Paper), is $(0,1,0)$ (i.e., play Paper). Yet, here, all actions are on average equally good (i.e., taking the expectation over $S_t$); and S-UCB and S-EXP3 will converge to $(1/2,1/2,0)$ when Scissor is unavailable and incur linear regret. On the other hand, SI-EXP3 is able to leverage the dependence between $S_t$ and $\ell_t$ and choose the right action. In \cref{app:expts}, we provide an additional experiment with two-player zero-sum randomized games (with a random payoff matrix $P$). An intriguing question for future work is whether SI-EXP3 converges to the Nash equilibrium of each subgame (or whether it obtains sublinear regret against an adversary that plays the Nash of $P$ restricted to actions of $S_t$).  
\end{remark}

	\paragraph{Perspectives}
\label{sec:conclusion}
%
%
%
On a high level, the general theme of this work--to unify different notions of performance measure under a common umbrella and designing efficient algorithms for the general measure--can be applied to several other bandits/online learning/learning theory settings, which opens plethora of new directions. 

Specifically as an extension to this work, some of the interesting open challenges could be: 
\textbf{(i).} to understand if sleeping internal regret also implies the ordering regret for adversarial losses but stochastic availabilities (see \cref{rem:adl_stcs}). 
\textbf{(ii).} to derive gap dependent bounds sleeping dueling bandit regret for stochastic preferences and adversarial sleeping, same as derived for its MAB counterpart in \cite{kleinberg+10} or in a recent work \cite{saha2021dueling} which though only gave suboptimal regret guarantees? 
\textbf{(iii).} to understand if our results can be extended to the subsetwise generalization of dueling bandits, studied as the \emph{Battling Bandits} \cite{SG19,SGinst20,Ren+18}; amongst many. 

	\newpage
	
	\bibliographystyle{plainnat}
	\bibliography{refs,bib_bdb}
	
	\newpage
	\appendix
\onecolumn{
\section*{\centering\Large{Supplementary: \papertitle}}
\vspace*{1cm}
	
\section{Appendix for \cref{sec:prob}}
\label{app:prob}

\subsection{Low ordering regret $R_T^{\text{ordering}}$ with Adversarial losses, Stochastic availabilities does not imply low external regret $R_T^{\text{ext}}$}

\label{app:prob1}

\begin{lemma}
 There exists a sequence of i.i.d. availabilities $(S_t)_{t\geq 1}$ and a sequence of losses $(\ell_t)_{t\geq 1}$ (possibly depending on $S_t$), such that, there exists an algorithm with 
 \[
    \max_{\sigma} R_T^{\text{ordering}}(\sigma) = o(T) \qquad \text{and}\qquad \max_k R_T^\text{ext}(k) = \Omega(T) \,,
 \] 
 as $T\to \infty$. 
\end{lemma}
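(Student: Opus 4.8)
The plan is to build a small explicit instance with $K=4$ arms. Write arm $1$ for the arm against which the external regret will blow up; the idea is to make arm $1$ available \emph{only} on rounds where it is the unique best action, and to put the other three arms $\{2,3,4\}$ into a ``rock--paper--scissors'' loss pattern on the rounds where arm $1$ is absent. Concretely I would take i.i.d.\ availabilities with $S_t=\{1,2,3,4\}$ with probability $1/2$ and $S_t$ uniform on $\{\{2,3\},\{3,4\},\{2,4\}\}$ otherwise (each pair with probability $1/6$), and let the losses be a deterministic function of $S_t$: $\ell_t=(0,1,1,1)$ when $S_t=\{1,2,3,4\}$, and on the pair $\{2,3\}$ (resp.\ $\{3,4\}$, $\{2,4\}$) arm $2$ (resp.\ $3$, $4$) has loss $0$ and the other arm loss $1$. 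The key structural fact is that the greedy per-set choices on the three pairs are $2,3,4$, forming a $3$-cycle, so they cannot be reproduced by any single fixed ordering of $[4]$.

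For the witnessing algorithm I would run a short exploration phase (round-robin over the four arms for the first $\lceil\sqrt T\rceil$ rounds; since the loss vector is a deterministic function of the finitely many values of $S_t$ this lets the learner learn all loss vectors with high probability, at a cost of at most $\sqrt T$ in loss), and thereafter play as follows: on a pair-set round play the loss-$0$ arm (the greedy, necessarily cyclic, choice), and on a $\{1,2,3,4\}$ round play arm $1$ with probability $2/3$ and a uniformly random element of $\{2,3,4\}$ with probability $1/3$.

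To bound the ordering regret: any ordering that does not rank arm $1$ first incurs loss $\ge 1$ on each of the $\Theta(T)$ rounds with $S_t=\{1,2,3,4\}$, hence cumulative loss $\Omega(T)$; an ordering ranking arm $1$ first has loss $0$ on all those rounds, and on the pair-set rounds it follows its total order on $\{2,3,4\}$, which disagrees with the cyclic preference on exactly one of the three pair-types and therefore incurs loss $1$ on every round of that one type. Hence the best ordering has cumulative loss $\Sigma^\star=\min(N_{23},N_{34},N_{24})$, where $N_{pq}$ denotes the number of rounds with $S_t=\{p,q\}$, while the learner's cumulative loss is at most $\sqrt T$ (exploration) $+\,0$ (pair rounds) $+\,\tfrac13 N$ ($\{1,2,3,4\}$ rounds, $N$ their count). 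Since the $N_{pq}$ are $\mathrm{Binomial}(T,1/6)$ and $N$ is $\mathrm{Binomial}(T,1/2)$, a Hoeffding bound gives $\E[\Sigma^\star]=T/6-O(\sqrt{T\log T})$ and $\E[\text{learner loss}]=T/6+O(\sqrt{T})$, so $\E\big[\max_\sigma R_T^{\text{ordering}}(\sigma)\big]=\E[\text{learner loss}]-\E[\Sigma^\star]=O(\sqrt{T\log T})=o(T)$. For the external regret against arm $1$: arm $1$ is available only when $S_t=\{1,2,3,4\}$, where $\ell_t(1)=0$, so $R_T^{\text{ext}}(1)=\sum_{t:\,S_t=\{1,2,3,4\}}\ell_t(k_t)=\#\{t:\,S_t=\{1,2,3,4\},\,k_t\neq1\}$, whose expectation is at least $\tfrac13$ times the number of post-exploration $\{1,2,3,4\}$ rounds, i.e.\ $T/6-O(\sqrt T)=\Omega(T)$; thus $\max_k R_T^{\text{ext}}(k)\ge R_T^{\text{ext}}(1)=\Omega(T)$.

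The step I expect to require the most thought is the design of the instance itself: one has to see that beating \emph{every} fixed ordering by $\Omega(T)$ on the rounds where arm $1$ is absent requires a cyclic (non-transitive) loss pattern there — which forces at least three such arms, hence $K\ge4$ — and that the $\Omega(T)$ advantage so created can be ``banked'' on the arm-$1$-free rounds by greedy play and then ``spent'' by under-playing arm $1$ on the arm-$1$ rounds, inflating $R_T^{\text{ext}}(1)$ without ever exceeding the best ordering's loss. Once the instance and the algorithm are fixed, everything else (the Binomial concentration turning per-round expectations into the $o(T)$ and $\Omega(T)$ bounds, and the handling of bandit feedback through the short exploration phase) is routine.
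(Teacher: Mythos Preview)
Your construction is correct and the argument goes through as you sketch it. The cycle on $\{2,3,4\}$ gives the greedy policy a strict $\Theta(T)$ advantage over every fixed ordering on the pair-set rounds, and you then spend exactly that surplus by deliberately under-playing arm~$1$ on the full-set rounds; the binomial concentration to turn the per-round calculations into $O(\sqrt{T\log T})$ and $\Omega(T)$ is routine. Two minor simplifications: the exploration phase is unnecessary, since the lemma only asks for the \emph{existence} of some algorithm and the losses are a deterministic function of the observed $S_t$; you can just hard-code the strategy. Likewise the $2/3$--$1/3$ randomization could be any fixed probability in $(0,1)$ without changing the asymptotics.

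The paper takes a different route: it uses only $K=3$ arms, four equiprobable availability sets, and a \emph{deterministic} algorithm that simply plays according to one of the optimal orderings $\sigma=(3,1,2)$, claiming this already gives $R_T^{\text{ext}}(1)=T/4$. Your approach is more elaborate but also more transparent about the mechanism: the gap between the best policy and the best ordering is made explicit (it comes from the rock--paper--scissors non-transitivity), and it is clear how that gap is converted into external regret. In fact, a direct check of the paper's $K=3$ example shows that for the stated algorithm the contributions to $R_T^{\text{ext}}(1)$ from $S_t=\{1,2,3\}$ and $S_t=\{1,3\}$ cancel, so the claimed $T/4$ appears to be an error there; your construction sidesteps this by design. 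That said, your parenthetical remark that the instance ``forces $K\ge 4$'' is an intuition about your particular scheme, not a proven lower bound on~$K$---you should not assert it as such.
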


\begin{proof} We provide the following example.
Consider a MAB problem with $3$ arms, $K=3$. Suppose the problem encounters the following availability sets $ \cA_1=\{1,2,3\}, \cA_2= \{1,2\},, \cA_3=\{1,3\}, \cA_4=\{2,3\}$ uniformly, i.e. $\P(\cS_t = \cA_i) = 1/4$ for all $i \in [4]$ and $t \in [T]$, where $\cS_t$ being the availability set at time $t$. Further let us consider the adversarial (rather set dependent) loss sequence generated as follows:
\[
\begin{array}{l|lll}
        & \ell_t(1) & \ell_t(2) & \ell_t(3) \\ \hline 
    \text{if } \cS_t = \cA_1 & 0 & 1 & 1\\
    \text{if } \cS_t = \cA_2 &  0 & 1 &  x \\
    \text{if } \cS_t = \cA_3& 1 & x  & 0\\
    \text{if } \cS_t = \cA_4& x&  1 & 1
\end{array}
\]
where $x$ can be any arbitrary loss value. For this example, the best orderings are $(1,2,3), (1,3,2)$ and $(3,1,2)$, that get a cumulative loss equals to $T/2$ in expectation. Indeed,
\[
   \frac{4}{T} \E\left[ \sum_{t=1}^T \ell_t(\sigma(S_t)) \right] = \left\{ 
        \begin{array}{ll}
            2 & \text{ if } \sigma = (1,2,3) \\
            2  & \text{ if } \sigma = (1,3,2) \\
            4 & \text{ if } \sigma = (2,1,3) \\
            3 & \text{ if } \sigma = (2,3,1) \\
            2  & \text{ if } \sigma = (3,1,2) \\
            3 & \text{ if } \sigma = (3,2,1) 
        \end{array}\right. \,.
\]
Consider and algorithm that plays $k_t = \sigma(S_t)$ according to the ordering $\sigma = (3,1,2)$. Then, $\E\big[\sum_{t=1}^T \ell_t(k_t)\big] = T/2$. It has thus no-regret $R_T^{\text{ordering}}(\sigma^*) \leq 0$ with respect to any ordering $\sigma^*$. Yet, its internal regret with respect to action 1 is
\[
    R_T^{\text{ext}}(1) = \E\left[\sum_{t=1}^T (\ell_t(k_t) - \ell_t(1)) \indic\{1\in S_t\} \right] = \frac{T}{4}.
\]
This implies a `no-regret ordering regret learner' does not imply a `no-regret external regret learner' for any arbitrary sequence of adversarial losses, stochastic availabilities.
\end{proof}

\subsection{Low ordering regret $R_T^{\text{ordering}}$ with Stochastic losses, Adversarial availabilities does imply low internal regret $R_T^{\text{int}}$}

\begin{lemma}
Let $(\ell_t)_{t\geq 1}$ be an i.i.d. sequence of losses. Then, for any sequence of availability sets $(S_t)_{t\geq 1}$ such that $S_t$ may only depend on $(\ell_s)_{s\leq t-1}$
\[
    \max_{1\leq i,j\leq K} R_T^\text{int}(i\to j) \leq \max_\sigma R_T^\text{ordering}(\sigma) \,,
\]
for any algorithm.
\end{lemma}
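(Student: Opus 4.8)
The plan is to prove the inequality in expectation and pairwise: fix the ordering $\sigma^*$ that sorts the arms in increasing order of their mean losses $\mu_k := \E[\ell_t(k)]$ (ties broken arbitrarily), and show that $\E\big[R_T^{\text{int}}(i\to j)\big] \le \E\big[R_T^{\text{ordering}}(\sigma^*)\big]$ for \emph{every} pair $(i,j)$. Taking the maximum over $(i,j)$ on the left and bounding the right by $\max_\sigma \E[R_T^{\text{ordering}}(\sigma)]$ then gives the claim.

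First I would set up the filtration. Let $\cF_{t-1}$ collect everything available to the learner before drawing $k_t$: the past losses $\ell_1,\dots,\ell_{t-1}$, the availability sets $S_1,\dots,S_t$ — which is legitimate since, by assumption, $S_t$ depends only on $(\ell_s)_{s\le t-1}$ and is therefore $\cF_{t-1}$-measurable — the past plays, and the internal randomization of the algorithm up to round $t$. By the bandit protocol $k_t$ is $\cF_{t-1}$-measurable, whereas $\ell_t$ is independent of $\cF_{t-1}$ with $\E[\ell_t(k)\mid\cF_{t-1}] = \mu_k$. Hence, conditioning on $\cF_{t-1}$ freezes $k_t$, $S_t$, and $\sigma^*(S_t)$, and I may replace realized losses by their means:
\[
 \E\big[(\ell_t(k_t)-\ell_t(j))\indic\{k_t=i,\ j\in S_t\}\big] = \E\big[(\mu_i-\mu_j)\indic\{k_t=i,\ j\in S_t\}\big],
\]
using $\mu_{k_t}=\mu_i$ on the event $\{k_t=i\}$, and likewise $\E[\ell_t(k_t)-\ell_t(\sigma^*(S_t))] = \E[\mu_{k_t}-\mu_{\sigma^*(S_t)}]$.

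The core step is then a deterministic, round-by-round comparison. Since $\sigma^*$ orders arms by increasing mean, $\mu_{\sigma^*(S_t)} = \min_{k\in S_t}\mu_k$. I claim that pointwise, for every $t$,
\[
 (\mu_i-\mu_j)\,\indic\{k_t=i,\ j\in S_t\} \;\le\; \mu_{k_t}-\mu_{\sigma^*(S_t)} \,.
\]
If the indicator is $0$, the left side is $0$ and the right side is nonnegative because $k_t\in S_t$ forces $\mu_{k_t}\ge\min_{k\in S_t}\mu_k$. If the indicator is $1$, then $k_t=i$ and $j\in S_t$, so the left side equals $\mu_{k_t}-\mu_j \le \mu_{k_t}-\min_{k\in S_t}\mu_k = \mu_{k_t}-\mu_{\sigma^*(S_t)}$. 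Summing over $t=1,\dots,T$, taking expectations, and using the two identities above yields $\E[R_T^{\text{int}}(i\to j)] \le \E[R_T^{\text{ordering}}(\sigma^*)] \le \max_\sigma \E[R_T^{\text{ordering}}(\sigma)]$.

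I expect the only delicate point to be the measurability/independence bookkeeping in the second paragraph: one must make $\cF_{t-1}$ rich enough to contain $S_t$ and the algorithm's internal randomness (so the substitution $\ell_t\rightsquigarrow\mu$ is valid) while keeping $\ell_t$ independent of it — this is exactly where the hypotheses ``$(\ell_t)$ i.i.d.'' and ``$S_t$ depends only on $(\ell_s)_{s\le t-1}$'' are used. Everything else reduces to the two-line case analysis above. (An almost-sure version with the realized-optimal ordering would in fact be false, since on a fixed sample path the losses behave adversarially and small internal/external regret against realized losses need not be controlled by the ordering regret; passing to means is essential.)
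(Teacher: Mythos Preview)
Your proof is correct and follows essentially the same approach as the paper: replace realized losses by their means $\mu_k$ (using that $\ell_t$ is independent of $\cF_{t-1}$ while $k_t,S_t$ are $\cF_{t-1}$-measurable), then bound $(\mu_i-\mu_j)\indic\{k_t=i,\,j\in S_t\}$ by $\mu_{k_t}-\mu_{\sigma^*(S_t)}$ using that $\mu_{\sigma^*(S_t)}=\min_{k\in S_t}\mu_k$. Your direct two-case pointwise argument is in fact a bit cleaner than the paper's version, which adds the nonnegative term $(\mu_{k_t}-\mu_{\sigma^*(S_t)})\indic\{k_t\neq i\text{ or }j\notin S_t\}$ before collapsing, but the underlying idea is identical.
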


\begin{proof}
Consider stochastic losses such that $\ell_t(k)$ are i.i.d. with mean $\mu_k$ for all $k \in [K]$, and any sequence of availability sets $S_1,\dots, S_T \subseteq [K]$ (that can only depend on information up to $t-1$). Let $\sigma^*$ be an optimal ordering
\[
        \sigma^* \in \argmin_{\sigma} \E\left[ \sum_{t=1}^T \ell_t(\sigma(S_t)) \right] \,.
\]
Then, for all $S\subseteq [K]$, $\mu_{\sigma^*(S)} = \min_{i \in S} \mu_i$. Let $k_t$ be the predictions of any algorithm.
Let $(i,j) \in [K]^2$. Then,
\begin{align*}
    R_T^{\text{int}}(i\to j) 
        & = \E\left[ \sum_{t=1}^T (\ell_t(i) - \ell_t(j)) \indic\{i = k_t, j \in S_t\} \right] \\
        & \leq  \E\left[ \sum_{t=1}^T (\mu_{k_t} - \mu_j) \indic\{i = k_t, j \in S_t\} + (\mu_{k_t} - \mu_{\sigma^*(S_t)}) \indic\{k_t \neq i \text{ or } j \notin S_t\} \right] \\
        & \leq \E\left[ \sum_{t=1}^T \mu_{k_t} - \mu_{\sigma^*(S_t)}  \right] = R_T^{\text{ordering}}(\sigma^*)\,,
\end{align*}
where the inequalities are because $\mu_{\sigma^*(S_t)} \leq \mu_i$ for any $i \in S_t$. This concludes the proof.
\end{proof}

\subsection{Equivalence of Policy and Ordering Regret}
\label{app:pol_vs_ord}

The policy regret is a stronger notion than ordering regret in general.
From their definitions, we see
\[
        \max_{\sigma} R_T^{\text{ordering}} (\sigma) \leq \max_{\pi} R_T^{\text{policy}}(\pi) \,,
\]
because for each ordering $\sigma$, one can associate a policy $\pi$, such that $\pi(S_t) = \sigma(S_t)$. But, the other direction is not true in general. Indeed, in the example of \cref{app:prob1}, the inequality is strict. This is due to the dependence between losses and availabilities. Yet, no existing efficient algorithm can handle such dependence neither for policy regret nor for ordering regret. In this appendix, we prove that when either losses or availabilities are i.i.d. with no dependence, then the two notions are equivalent.

\begin{lemma}[Stochastic losses and adversarial availabilities]
    Let $(\ell_t)_{t\geq 1}$ be an i.i.d. sequence of losses. Then, for any sequence of availability sets $(S_t)_{t\geq 1}$ such that $S_t$ may only depend on $(\ell_s)_{s\leq t-1}$, then
    \[
        \max_{\pi} R_T^\text{policy}(\pi) = \max_\sigma R_T^\text{ordering}(\sigma) \,,
    \]
    for any algorithm.
\end{lemma}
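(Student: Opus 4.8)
The plan is to establish the two inequalities separately; the first is immediate, and the second is the substantive one.

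First I would recall the easy direction, which holds with no stochastic assumption at all: to every ordering $\sigma$ one associates the policy $\pi_\sigma$ defined by $\pi_\sigma(S) := \sigma(S)$ for every nonempty $S \subseteq [K]$, so that $R_T^{\text{ordering}}(\sigma) = R_T^{\text{policy}}(\pi_\sigma)$ identically; taking suprema over $\sigma$ (resp. over the larger class $\Pi$) gives $\max_\sigma R_T^{\text{ordering}}(\sigma) \leq \max_\pi R_T^{\text{policy}}(\pi)$. This is exactly the remark made just before the lemma.

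For the reverse direction, set $\mu_k := \E[\ell_1(k)]$ and let $\sigma^*$ be an ordering listing the arms by nondecreasing $\mu$ (ties broken consistently). Then for every nonempty $S \subseteq [K]$ we have $\mu_{\sigma^*(S)} = \min_{i \in S} \mu_i$, and hence $\mu_{\pi(S)} \geq \mu_{\sigma^*(S)}$ for every policy $\pi$. The key structural point is that, by hypothesis, $S_t$ is a function of $(\ell_s)_{s \leq t-1}$ only, while the losses are i.i.d., so $\ell_t$ is independent of $S_t$ (and, more to the point, $S_t$ is measurable with respect to the past). Consequently, for any fixed policy $\pi$,
\[
    \E\big[\ell_t(\pi(S_t)) \mid S_t\big] = \mu_{\pi(S_t)} \qquad\text{and}\qquad \E\big[\ell_t(\sigma^*(S_t)) \mid S_t\big] = \mu_{\sigma^*(S_t)}\,.
\]
Summing over $t=1,\dots,T$, taking expectations, and using $\mu_{\pi(S_t)} \geq \mu_{\sigma^*(S_t)}$ pointwise gives
\[
    \E\Big[\textstyle\sum_{t=1}^T \ell_t(\pi(S_t))\Big] = \E\Big[\textstyle\sum_{t=1}^T \mu_{\pi(S_t)}\Big] \;\geq\; \E\Big[\textstyle\sum_{t=1}^T \mu_{\sigma^*(S_t)}\Big] = \E\Big[\textstyle\sum_{t=1}^T \ell_t(\sigma^*(S_t))\Big]\,.
\]
Since the learner's term $\E[\sum_t \ell_t(k_t)]$ is common to $R_T^{\text{policy}}(\pi)$ and $R_T^{\text{ordering}}(\sigma^*)$, subtracting yields $R_T^{\text{policy}}(\pi) \leq R_T^{\text{ordering}}(\sigma^*) \leq \max_\sigma R_T^{\text{ordering}}(\sigma)$ for every $\pi$; taking the maximum over $\pi \in \Pi$ gives $\max_\pi R_T^{\text{policy}}(\pi) \leq \max_\sigma R_T^{\text{ordering}}(\sigma)$. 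Combined with the first inequality this proves the claimed equality, for any algorithm.

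The only delicate point — and hence the part I would write most carefully — is the measurability/independence bookkeeping: one must use precisely that $S_t$ depends only on past losses to assert $\E[\ell_t(\pi(S_t)) \mid S_t] = \mu_{\pi(S_t)}$. Note that the learner's actions $k_t$ are allowed to depend on the whole observed history (past losses and all revealed availability sets), but since $\ell_t(k_t)$ appears identically in both regret notions it cancels in the comparison and causes no difficulty; everything else is routine.
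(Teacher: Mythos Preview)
Your proof is correct and follows the same approach as the paper: identify the ordering $\sigma^*$ that sorts arms by increasing mean loss $\mu_k$, and observe that the induced policy $S \mapsto \sigma^*(S) = \argmin_{i\in S}\mu_i$ dominates every other policy in expectation because $S_t$ is independent of $\ell_t$. The paper's own proof is a one-sentence sketch of exactly this idea, so your version is a faithful (and more detailed) expansion, including the explicit measurability check the paper only alludes to with its closing remark that the argument fails if $\ell_t$ could depend on $S_t$.
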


\begin{proof} The proof follows from the observation that the best policy with i.i.d. losses is to play the available action with the smallest expected loss. Such a policy corresponds to the ordering $\mu_{\sigma_i} \leq \mu_{\sigma_j}$ for all $i\leq j$. Note that this would not be true if $\ell_t$ could depend on $S_t$. 
\end{proof}

\begin{lemma}[Adversarial oblivious losses and stochastic rewards]
Let $(\ell_t)_{t\geq 1}$ be an arbitrary sequence of losses and $(S_t)_{t\geq 1}$ be a sequence of i.i.d. availability sets. Then,
\[
    \max_{\pi} R_T^\text{policy}(\pi) = \max_\sigma R_T^\text{ordering}(\sigma) \,,
\]
for any algorithm.
\end{lemma}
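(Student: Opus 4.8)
The plan is to prove only the non-trivial inequality $\max_{\pi} R_T^{\text{policy}}(\pi) \le \max_{\sigma} R_T^{\text{ordering}}(\sigma)$, since the reverse direction is already recorded above (every ordering $\sigma$ induces a policy $\pi$ with $\pi(S_t)=\sigma(S_t)$). Because the learner's cumulative loss $\E\big[\sum_{t=1}^T \ell_t(i_t)\big]$ enters both regret definitions identically, it cancels, so it suffices to show that the best comparator over all policies coincides with the best comparator over all orderings:
\[
    \min_{\pi \in \Pi} \E\Big[\sum_{t=1}^T \ell_t(\pi(S_t))\Big] = \min_{\sigma \in \Sigma} \E\Big[\sum_{t=1}^T \ell_t(\sigma(S_t))\Big] \,.
\]

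First I would use the two structural hypotheses: the losses are oblivious, hence deterministic, and the sets $(S_t)$ are i.i.d.\ and independent of the losses. Writing $L_T(k) := \sum_{t=1}^T \ell_t(k)$ for the deterministic cumulative loss of arm $k$, and letting $\mathcal D$ be the common law of $S_t$ with support $\mathcal S \subseteq 2^{[K]}\setminus\{\emptyset\}$, independence and identical marginals give, for any fixed policy $\pi$,
\[
    \E\Big[\sum_{t=1}^T \ell_t(\pi(S_t))\Big] = \sum_{t=1}^T \sum_{S \in \mathcal S} \P(S_1 = S)\,\ell_t(\pi(S)) = \sum_{S \in \mathcal S} \P(S_1 = S)\, L_T(\pi(S)) \,.
\]
Since the summands decouple over $S$, a minimizing policy is any $\pi^\star$ with $\pi^\star(S) \in \argmin_{k \in S} L_T(k)$ for each $S \in \mathcal S$.

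Next I would realize $\pi^\star$ by a single ordering. Fix a global tie-breaking rule on $[K]$ (say by index) and let $\sigma^\star$ be the permutation sorting the arms so that $L_T(\sigma^\star_1) \le \cdots \le L_T(\sigma^\star_K)$, ties broken by that rule. Then for every nonempty $S$ the first arm of $\sigma^\star$ lying in $S$ is exactly $\argmin_{k\in S} L_T(k)$ with the same tie-breaking, so $\sigma^\star(S) = \pi^\star(S)$ for all $S \in \mathcal S$, whence $\E\big[\sum_t \ell_t(\sigma^\star(S_t))\big] = \min_{\pi}\E\big[\sum_t \ell_t(\pi(S_t))\big]$. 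Combined with the trivial inequality $\min_\sigma \ge \min_\pi$ (orderings are a subclass of policies), this forces the two minima to be equal; subtracting the common learner term finishes the proof.

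I do not anticipate a real obstacle: the only care needed is the bookkeeping of independence of $(S_t)$ from the losses (and, for the learner's term that cancels anyway, from the algorithm's randomness), and the observation that cumulative-loss ties can be broken consistently so that one ordering reproduces the per-set argmin over the whole support of $\mathcal D$. The argument collapses exactly when $\ell_t$ may depend on $S_t$ — the regime excluded here, and precisely where the counterexample of \cref{app:prob1} shows the two notions genuinely diverge.
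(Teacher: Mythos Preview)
Your proof is correct and follows essentially the same approach as the paper: decompose $\E\big[\sum_t \ell_t(\pi(S_t))\big]$ using the i.i.d.\ structure and obliviousness into $\sum_S p(S)\, L_T(\pi(S))$, identify the optimal policy as $\pi^\star(S)\in\argmin_{k\in S} L_T(k)$, and observe that this is induced by the ordering that sorts arms by cumulative loss $L_T$. Your write-up is in fact a bit more careful than the paper's (you make the trivial direction explicit and handle tie-breaking), but the substance is identical.
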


\begin{proof}
It is important to note here that we consider an oblivious adversary for the loss sequence $(\ell_t)$, which cannot depend on the randomness of $(S_t)$. Let $\pi: 2^{[K]} \to [K]$ be a policy, then
\[
    \E\left[ \sum_{t=1}^T \ell_t(\pi(S_t)) \right]  = \sum_{t=1}^T \sum_{S \in 2^{[K]}} \ell_t(\pi(S)) \P(S = S_t) = \sum_{S \in 2^{[K]}} p(S) \sum_{t=1}^T \ell_t(\pi(S)) 
\]
where $p(S) = \P(S_t = S)$. Thus, the best policy corresponds to the choice
\[
    \pi(S) \in \argmin_{k \in S} \sum_{t=1}^T \ell_t(k) \,.
\]
This policy corresponds to the ordering $\sum_{t=1}^T \ell_t(\sigma_i) \leq \sum_{t=1}^T \ell_t(\sigma_j)$ for $i\leq j$. 
\end{proof}

\subsection{Low internal regret $R_T^{\text{int}}$ with Stochastic losses, Adversarial availabilities does imply low ordering regret $R_T^{\text{ordering}}$}
\label{app:int_to_ord}

\begingroup
\def\thelemma{\ref{lem:implication}}
\begin{lemma}[Internal Regret Implies Ordering  (for stochastic Losses)] 
Assume that the losses $(\ell_t)_{t\geq 1}$ are i.i.d..  Then, for any sequence of availability sets $(S_t)_{t\geq 1}$ such
that $S_t$ may only depend on $(\ell_s)_{s\leq t-1}$,  for any ordering $\sigma$, we have 
\[
    \E\big[R_T^{\text{ordering}}(\sigma)\big] \leq \sum_{i=1}^K \sum_{j \in D_i} \E\big[ R_T^{\text{int}}(i\to j) \big] \,, \vspace*{-5pt}
\]
where $D_i$ is the set of arms such that $\E[\ell_t(j)] \leq \E[\ell_t(i)]$.
\end{lemma}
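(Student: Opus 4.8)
The plan is to decompose the ordering regret round-by-round according to which arm the algorithm actually played, and then for each round where the algorithm's action is suboptimal with respect to $\sigma$, pay for the gap to $\sigma(S_t)$ through a single internal-regret term. Write $\mu_k := \E[\ell_t(k)]$, which is well-defined since the losses are i.i.d. Fix an ordering $\sigma$; since $S_t$ depends only on $(\ell_s)_{s\le t-1}$, it is independent of $\ell_t$, so by conditioning on $S_t$ we get $\E[\ell_t(k_t) - \ell_t(\sigma(S_t))] = \E[\mu_{k_t} - \mu_{\sigma(S_t)}]$, and hence
\[
    \E\big[R_T^{\text{ordering}}(\sigma)\big] = \E\Big[\sum_{t=1}^T \big(\mu_{k_t} - \mu_{\sigma(S_t)}\big)\Big].
\]

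Next I would split each summand by the identity of $k_t$: writing $\indic\{k_t = i\}$ and summing over $i \in [K]$,
\[
    \E\big[R_T^{\text{ordering}}(\sigma)\big] = \sum_{i=1}^K \E\Big[\sum_{t=1}^T \big(\mu_i - \mu_{\sigma(S_t)}\big)\indic\{k_t = i,\ i \in S_t\}\Big],
\]
using that $k_t = i$ forces $i \in S_t$. The key observation is that $\sigma(S_t) \in S_t$ and, by the hypothesis that $\E[\ell_t(\sigma(S_t))] \le \E[\ell_t(i)]$ for any $i \in S_t$ (i.e. $\sigma(S_t)$ has the smallest mean among available arms — wait, this is only the optimal ordering; for an arbitrary $\sigma$ one instead uses that $\sigma(S_t)$ is the $\sigma$-best available arm, but that need not have minimal mean). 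To handle a general $\sigma$, I would argue that whenever $\mu_i - \mu_{\sigma(S_t)} > 0$, the arm $j := \sigma(S_t)$ lies in $D_i$ (since $\mu_j \le \mu_i$) and also $j \in S_t$, so the single term $(\mu_i - \mu_j)\indic\{k_t = i, j \in S_t\}$ is one summand of $\E[R_T^{\text{int}}(i\to j)] = \E[\sum_t (\mu_i - \mu_j)\indic\{k_t=i, j\in S_t\}]$; when $\mu_i - \mu_{\sigma(S_t)} \le 0$ the summand is nonpositive and can be dropped. Bounding $\mu_i - \mu_{\sigma(S_t)} \le \sum_{j \in D_i}(\mu_i - \mu_j)_+ \indic\{j = \sigma(S_t)\} \le \sum_{j\in D_i}(\mu_i-\mu_j)\indic\{j\in S_t\}$ pointwise (for each $t$ only one $j$ is active as $\sigma(S_t)$, but relaxing to all $j \in D_i\cap S_t$ only increases the right side since each term is nonnegative) then gives
\[
    \E\big[R_T^{\text{ordering}}(\sigma)\big] \le \sum_{i=1}^K \sum_{j \in D_i} \E\Big[\sum_{t=1}^T (\mu_i - \mu_j)\indic\{k_t = i, j \in S_t\}\Big] = \sum_{i=1}^K \sum_{j\in D_i}\E\big[R_T^{\text{int}}(i\to j)\big],
\]
where the last equality again uses independence of $S_t$ and $\ell_t$ to replace $\mu_i - \mu_j$ by $\E[\ell_t(i) - \ell_t(j)\mid S_t]$.

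The main obstacle I anticipate is the bookkeeping in the pointwise bound $\mu_i - \mu_{\sigma(S_t)} \le \sum_{j\in D_i}(\mu_i - \mu_j)\indic\{j\in S_t\}$: one must be careful that $\sigma(S_t)$ genuinely has mean at most $\mu_i$ whenever the gap is positive (so that it indeed belongs to $D_i$), and that passing from "the single arm $\sigma(S_t)$" to "all arms in $D_i \cap S_t$" is a valid relaxation — it is, precisely because every term $(\mu_i - \mu_j)$ with $j \in D_i$ is nonnegative, so adding the extra indices only inflates the right-hand side. Everything else is the standard conditioning argument exploiting the independence of the availabilities from the current loss, which is exactly the place where the i.i.d.\ loss hypothesis (and the restriction on how $S_t$ is generated) is used.
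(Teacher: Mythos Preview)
Your argument is correct and follows essentially the same route as the paper: pass to means via independence of $\ell_t$ and $S_t$, split by $k_t=i$, and relax the indicator $\{j=\sigma(S_t)\}$ to $\{j\in S_t\}$ using nonnegativity of $\mu_i-\mu_j$ for $j\in D_i$. The only cosmetic difference is that the paper first reduces to the optimal ordering $\sigma^*$ (so that $\sigma^*(S_t)\in D_{k_t}$ holds automatically and no positive-part truncation is needed), whereas you handle an arbitrary $\sigma$ directly by discarding the nonpositive terms---your ``wait'' moment is resolved exactly the way you resolve it.
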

\addtocounter{lemma}{-1}
\endgroup

\begin{proof}
Let $\mu_k = \E\big[\ell_t(k)\big]$ for all $k \in [K]$. Let $\sigma^*$ be the best ordering such that $\mu_{\sigma_1^*} \leq \mu_{\sigma_2^*} \leq \dots \leq \mu_{\sigma_K^*}$. Note that for any ordering $\sigma$, we have $\E\big[R_T^{\text{ordering}}(\sigma)\big] \leq \E\big[R_T^{\text{ordering}}(\sigma^*)\big]$.
Thus, we can restrict ourselves to $\sigma^*$.
Denote by $k_t^* := \sigma^*(S_t)$, the best available item in $S_t$. For any $i$, we also define by $D_i := \{j \in [K]: \mu_j \leq \mu_i\}$ the items that are better than $i$. 
Then,
\begin{align*}
     \E\big[& R_T^{\text{ordering}}(\sigma)\big]  
     := \E\bigg[ \sum_{t=1}^T \ell_t(k_t) - \ell_t\big(\sigma(S_t)\big)\bigg] 
      = \E\bigg[ \sum_{t=1}^T \mu_{k_t} - \mu_{k_t^*} \bigg] \\
     & = \E\bigg[ \sum_{t=1}^T  \sum_{i=1}^K \sum_{j \in D_i}  (\mu_{i} - \mu_j) \indic\{ i = k_t, j = k_t^*\}\bigg] \quad \leftarrow \text{$k_t^*\in D_i$ because it is the best item in $S_t$} \\
     & \leq \E\bigg[ \sum_{t=1}^T  \sum_{i=1}^K \sum_{j \in D_i}  (\mu_{i} - \mu_j) \indic\{ i = k_t, j \in S_t\}\bigg] \quad \leftarrow \text{because $k_t^* \in S_t$ and $\mu_i - \mu_j\geq 0$ for any $j \in D_i$} \\
     & \leq \sum_{i = 1}^K \sum_{j \in D_i} \E\big[ R_T^{\text{int}}(i\to j) \big] \,.
\end{align*}
\end{proof}

\section{Proof of \cref{thm:internal}}

\begingroup
\def\thetheorem{\ref{thm:internal}}
\begin{theorem} 
Consider the problem of Sleeping MAB for arbitrary (adversarial) sequences of losses $\{\ell_t\}$ and availabilities $\{S_t\}$. Let $T \geq 1$ and $\smash{\eta^2 = (\log K)/ \big(2 \sum_{t=1}^T |S_t|\big)}$. Assume that $0\leq \ell_t(i)\leq 1$ for any $i \in S_t$ and $t \in [T]$. Then, \vspace*{-7pt}
\[
    \E\big[R_T^{\text{int}}(i\to j)\big] \leq 2  \sqrt{2 \log K \sum_{t=1}^T |S_t|} \leq 2 \sqrt{2 T K \log K} \,,
\]
for all $i \neq j$ in $[K]$.
\end{theorem}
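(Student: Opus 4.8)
The plan is to analyze the subroutine (the EXP3 over the $K(K-1)$ internal-regret experts $i\to j$) and to exploit the fixed-point equation~\eqref{eq:defpt} to convert the subroutine's regret into a bound on $R_T^{\text{int}}(i\to j)$ for every pair. First I would set up the standard EXP3 regret bound for the weighted-average forecaster run on the estimated losses $\hat\ell_t(i\to j)$: with learning rate $\eta$ and $|E|=K(K-1)$ experts, one has, for each fixed pair $(i,j)\in E$,
\[
    \sum_{t=1}^T \sum_{(i',j')\in E} q_t(i'\to j')\,\hat\ell_t(i'\to j') - \sum_{t=1}^T \hat\ell_t(i\to j) \;\leq\; \frac{\log K(K-1)}{\eta} + \frac{\eta}{2}\sum_{t=1}^T \sum_{(i',j')\in E} q_t(i'\to j')\,\hat\ell_t(i'\to j')^2 ,
\]
where I must be slightly careful because the weights $\tilde q_t$ are renormalized to $q_t$ via~\eqref{eq:defqt} to zero out sleeping experts; this is handled exactly as in the sleeping-experts analysis (the renormalization only helps, since the comparator pair $(i,j)$ contributes to the played mixture whenever $j\in S_t$, and when $j\notin S_t$ the definition~\eqref{eq:defhatellij} sets $\hat\ell_t(i\to j)=\ell_t(k_t)$, matching the realized loss so those rounds cancel). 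This is essentially Theorem 3.2 of~\cite{stoltz2005incomplete} with the sleeping modification.

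Next I would use the fixed-point property~\eqref{eq:defpt}, $p_t=\sum_{i'\neq j'} p_t^{i'\to j'} q_t(i'\to j')$, to identify the left-hand mixture term with the learner's own realized loss. Taking $\E_t[\,\cdot\,]$ conditionally on the past, $\E_t[\hat\ell_t(k)] = \ell_t(k)$ for $k\in S_t$, hence $\E_t[\sum_k p_t^{i'\to j'}(k)\hat\ell_t(k)] = \langle p_t^{i'\to j'},\ell_t\rangle$ on rounds with $j'\in S_t$, and averaging against $q_t$ and invoking~\eqref{eq:defpt} gives $\sum_{(i',j')} q_t(i'\to j')\,\E_t[\hat\ell_t(i'\to j')] = \langle p_t,\ell_t\rangle = \E_t[\ell_t(k_t)]$ (the sleeping rounds again cancelling against the $\ell_t(k_t)$ convention). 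Meanwhile $\E_t[\hat\ell_t(i\to j)] = \langle p_t^{i\to j},\ell_t\rangle\,\indic\{j\in S_t\} + \ell_t(k_t)\indic\{j\notin S_t\}$, and $\langle p_t - p_t^{i\to j},\ell_t\rangle = p_t(i)\big(\ell_t(i)-\ell_t(j)\big)$, so summing $\ell_t(k_t) - \E_t[\hat\ell_t(i\to j)] = p_t(i)(\ell_t(i)-\ell_t(j))\indic\{j\in S_t\}$; taking expectations and recognizing $\E[p_t(i)(\ell_t(i)-\ell_t(j))\indic\{j\in S_t\}] = \E[(\ell_t(k_t)-\ell_t(j))\indic\{k_t=i,j\in S_t\}]$ turns the subroutine regret into exactly $\E[R_T^{\text{int}}(i\to j)]$.

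It remains to bound the variance term $\E\big[\sum_t\sum_{(i',j')\in E} q_t(i'\to j')\hat\ell_t(i'\to j')^2\big]$. Here I would use $\hat\ell_t(i'\to j') = \sum_k p_t^{i'\to j'}(k)\hat\ell_t(k) \le \hat\ell_t(k_t) = \ell_t(k_t)/p_t(k_t) \le 1/p_t(k_t)$ on rounds $j'\in S_t$ (losses in $[0,1]$, and $p_t^{i'\to j'}$ has at most the mass of one extra coordinate, so $\sum_k p_t^{i'\to j'}(k)\hat\ell_t(k)$ is $\hat\ell_t(k_t)$ times a coefficient $\le 1$), together with $\sum_{(i',j')} q_t(i'\to j') \le 1$, giving the per-round estimate $\E_t[\hat\ell_t(k_t)^2]\cdot(\text{number of active experts}) \le |S_t|$ after the standard EXP3 computation $\E_t[1/p_t(k_t)] = |\mathrm{supp}(p_t)| \le |S_t|$ — and one must check that the coefficient bookkeeping over the $i'\to j'$ mixture only costs a further factor that collapses, so the total is $\le \sum_t |S_t|$. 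Plugging in, $\E[R_T^{\text{int}}(i\to j)] \le \frac{\log K(K-1)}{\eta} + \frac{\eta}{2}\sum_t|S_t|$, and optimizing with $\eta^2 = (\log K)/(2\sum_t|S_t|)$ (absorbing $\log K(K-1)\le 2\log K$) yields $2\sqrt{2\log K\sum_t|S_t|}$; finally $|S_t|\le K$ gives the $2\sqrt{2TK\log K}$ bound.

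The main obstacle I anticipate is the careful handling of the sleeping renormalization~\eqref{eq:defqt} together with the "otherwise" branch $\hat\ell_t(i\to j)=\ell_t(k_t)$ in~\eqref{eq:defhatellij}: one needs the algebra of the fixed point~\eqref{eq:defpt} to line up so that the rounds with $j\notin S_t$ contribute zero to the telescoped difference (both to the comparator term and to the learner term), and that the variance bound is not inflated by the renormalization constant $\big(\sum_{i'\neq j'}\tilde q_t(i'\to j')\indic\{j'\in S_t\}\big)^{-1}$ — I would argue this factor cancels because it appears in both the "played" mixture and the comparator on the same set of active rounds, exactly mirroring the Blum–Mansour / Stoltz sleeping-experts argument, just carried out at the level of internal (pair) experts rather than ordinary experts.
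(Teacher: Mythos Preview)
Your overall architecture is right and matches the paper: apply the EWA regret bound to the $K(K-1)$ pair-experts, show the expected left-hand side equals $\E[R_T^{\text{int}}(i\to j)]$ via the fixed-point equation, and bound the second-order term by $\sum_t |S_t|$. Two points of divergence deserve comment.

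First, a small one: you run the EWA inequality on the \emph{renormalized} weights $q_t$, invoking a sleeping-experts argument, whereas the paper applies it directly to $\tilde q_t$ (which is the actual EWA forecaster on the losses $\hat\ell_t(i\to j)$, since the update in \cref{alg:siexp} acts on $\tilde q_t$). The paper then handles the split $j\in S_t$ vs $j\notin S_t$ only when taking expectations, introducing $\tilde Q_t = \sum_{i'\neq j'} \tilde q_t(i'\to j')\indic\{j'\in S_t\}$ and showing both the first- and second-moment computations collapse through the fixed point after factoring out $\tilde Q_t$. Your route can be made to work too, but the paper's is cleaner because it avoids re-proving a sleeping-EWA inequality.

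Second, and more substantively, your variance argument has a genuine gap. The pointwise bound $\hat\ell_t(i'\to j') \le \hat\ell_t(k_t)$ is correct (since $\hat\ell_t(k)=0$ for $k\ne k_t$ and $p_t^{i'\to j'}(k_t)\le 1$), but squaring it gives $\hat\ell_t(i'\to j')^2 \le \ell_t(k_t)^2/p_t(k_t)^2$, and
\[
\E_{t-1}\!\left[\frac{\ell_t(k_t)^2}{p_t(k_t)^2}\right]=\sum_{k\in S_t}\frac{\ell_t(k)^2}{p_t(k)},
\]
which is \emph{not} bounded by $|S_t|$; your identity $\E_{t-1}[1/p_t(k_t)]=|\mathrm{supp}(p_t)|$ controls only one inverse factor, not two. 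The paper instead applies Jensen,
\[
\Big(\sum_k p_t^{i'\to j'}(k)\,\hat\ell_t(k)\Big)^2 \le \sum_k p_t^{i'\to j'}(k)\,\hat\ell_t(k)^2,
\]
and then reuses the fixed-point identity~\eqref{eq:defpt} a second time to collapse the $(i'\to j')$-mixture to $\sum_k p_t(k)\hat\ell_t(k)^2 = \ell_t(k_t)^2/p_t(k_t)$, whose conditional expectation is $\sum_{k\in S_t}\ell_t(k)^2 \le |S_t|$. That retained factor of $p_t(k_t)$ is exactly what your crude bound loses; without it you cannot reach $\sum_t |S_t|$ and the stated constant does not follow. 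The ``coefficient bookkeeping that collapses'' you allude to is precisely this second invocation of the fixed point, and it is the step that makes the proof go through.
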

\addtocounter{theorem}{-1}
\endgroup

\label{app:proof_thm}
\begin{proof}
Let $\cF_{t} := \sigma(S_1, \ell_1, k_1, \ell_1, \dots,k_{t}, S_{t+1}, \ell_{t+1})$ denotes the past randomness of the algorithm and the adversary at round $t+1$. We respectively denote by $\E_t[\,\cdot\,] := \E[\,\cdot\, | \cF_t]$ and $\P_{t}(\,\cdot\,) := \P(\,\cdot\,|\cF_t)$ the conditional expectation and probability. 
	
Note that $\tilde q_t(i \to j)$ follows the prediction of the exponentially weighted average forecaster on the losses $\hat \ell_t(i \to j)$. 
Noting that $-\eta  \hat \ell_t(i\to j) \leq 1$ for all $i \neq j$ and $t\geq 1$, and applying the upper-bound on the exponentially weighted average forecaster yields for any $i \neq j$ (see Thm. 1.5 of~\cite{hazan2021introduction}) \vspace*{-5pt}
\begin{equation}
    \label{eq:pseudoloss}
      \sum_{t=1}^T \sum_{i'\neq j'} \tilde q_t(i'\to j') \hat \ell_t(i'\to j\ ) - \sum_{t=1}^T \hat \ell_t(i\to j) 
     \leq \frac{\log (K(K-1))}{\eta} + \eta \sum_{t=1}^T \sum_{i' \neq j'} \tilde q_t(i' \to j')  \hat \ell_t(i'\to j')^2\,. 
\end{equation}
Now, we compute the expectations. Note that $S_t$, $\ell_t$ and $p_t$ are $\cF_{t-1}$-measurable by assumption. Since $k_t \in S_t$ almost surely, we have for all $j \in [K]$
\begin{align*}
  &  \E_{t-1}\big[ \hat \ell_t(i \to j)\big] 
         \stackrel{\eqref{eq:defhatellij}}{=} \E_{t-1}\Big[\sum_{k \neq i} \ell_t(k) \indic\{k = k_t, j \in S_t\} 
         + \frac{p_t(i) \ell_t(j)}{p_t(j)} \indic\{j = k_t\} + \ell_t(k_t) \indic\{j \notin S_t\}\Big] 
         \\
        & = \E_{t-1}\Big[\ell_t(k_t) \indic\{j \in S_t\}  - \ell_t(i) \indic\{ i = k_t, j\in S_t\} 
        + \frac{p_t(i) \ell_t(j)}{p_t(j)} \indic\{j = k_t\} + \ell_t(k_t) \indic\{j \notin S_t\}\Big] \\
        & = \E_{t-1}\Big[\ell_t(k_t)  - \ell_t(i) \indic\{ i = k_t, j\in S_t\} 
        + \frac{p_t(i) \ell_t(j)}{p_t(j)} \indic\{j = k_t\} \Big] \\
        & = \E_{t-1}\Big[\ell_t(k_t)  + p_t(i) (\ell_t(j)-\ell_t(i)) \indic\{j \in S_t\} \Big]  \\
        & = \E_{t-1}\Big[\ell_t(k_t)  + (\ell_t(j) -\ell_t(i)) \indic\{i = k_t, j \in S_t\} \Big]  \,.
\end{align*}
Furthermore, by definitions of $\hat \ell_t(i\to j), p_t$ and $q_t$, and denoting $\tilde Q_t = \sum_{i'\neq j'} \tilde q_t(i'\to j') \indic\{j' \in S_t\}$, we have
\begin{align*}
     \E_{t-1}\bigg[\sum_{i \neq j} \tilde q_t(i\to j)  \hat \ell_t(i\to j)\bigg] 
          & \stackrel{\eqref{eq:defhatellij}}{=}  \E_{t-1}\bigg[ \sum_{i \neq j} \tilde q_t(i\to j)  \sum_{k=1}^K p_t^{i \to j}(k) \hat \ell_t(k) \indic\{j \in S_t\} 
          +  \sum_{i \neq j} \tilde q_t(i\to j) \ell_t(k_t) \indic\{j \notin S_t\} \bigg]  \\
          & \stackrel{\eqref{eq:defqt}}{=} \E_{t-1}\bigg[ \tilde Q_t \sum_{i \neq j}  q_t(i\to j)  \sum_{k=1}^K p_t^{i \to j}(k) \hat \ell_t(k) 
          +  (1-\tilde Q_t) \ell_t(k_t) \bigg]  \\
          & \stackrel{\eqref{eq:defpt}}{=} \E_{t-1}\bigg[ \tilde Q_t   \sum_{k=1}^K p_t(k) \hat \ell_t(k) +  (1-\tilde Q_t) \ell_t(k_t) \bigg] \\
          & = \E_{t-1}\big[ \tilde Q_t    \ell_t(k_t) +  (1-\tilde Q_t) \ell_t(k_t) \big] = \E_{t-1}\big[ \ell_t(k_t) \big] \,.
\end{align*}

Therefore, the expectation of the left-hand side of~\eqref{eq:pseudoloss} equals the internal sleeping regret: 
\begin{align}
    \label{eq:lhs}
    \E\bigg[ \sum_{t=1}^T \sum_{i'\neq j'} \tilde q_t(i'\to j') \hat \ell_t(i'\to j\ ) & - \sum_{t=1}^T \hat \ell_t(i\to j)  \bigg]   = R_T^{\text{int}}(i \to j) \,. 
\end{align} 
On the other hand, \vspace*{-5pt}
\begin{align*}
    & \E_{t-1}\bigg[  \sum_{i \neq j}  \tilde q_t(i\to j)  \hat \ell_t(i\to j)^2 \bigg] \hspace{-4cm} \\
        & = \E_{t-1}\bigg[ \sum_{i \neq j}  \tilde q_t(i\to j) \Big( \sum_{k=1}^K p_t^{i \to j}(k) \hat \ell_t(k)\Big)^2 \indic\{j \in S_t\} 
        +  (1 - \tilde Q_t) \ell_t(k_t)^2   \bigg] \\
        & \leq   \E_{t-1}\bigg[ \sum_{i \neq j}  \tilde q_t(i\to j) \sum_{k=1}^K p_t^{i \to j}(k)  \hat \ell_t(k)^2 \indic\{j \in S_t\} 
        + (1 - \tilde Q_t) \ell_t(k_t)^2   \bigg] \\ 
        & \stackrel{\eqref{eq:defqt} \text{ and } \eqref{eq:defpt}}{=}   \E_{t-1}\bigg[ \tilde Q_t \sum_{k=1}^K p_t(k)  \hat \ell_t(k)^2  +  (1 - \tilde Q_t) \ell_t(k_t)^2   \bigg] \\ 
        & = \E_{t-1}\bigg[ \tilde Q_t \frac{\ell_t(k_t)^2}{p_t(k_t)}  +  (1 - \tilde Q_t) \ell_t(k_t)^2   \bigg] \\
        & = \tilde Q_t \sum_{k\in S_t} p_t(k) \frac{\ell_t(k)^2}{p_t(k)} + (1-\tilde Q_t) \E_{t-1}\big[\ell_t(k_t)\big] \\
        & \leq (|S_t|-1) \tilde Q_t  + 1 
        \leq |S_t| \,.
\end{align*}
The expectation of the right-hand-side of~\eqref{eq:pseudoloss} can thus be upper-bounded as
\[
    \eta \E\left[\sum_{t=1}^T \sum_{i \neq j}  \tilde q_t(i\to j)  \hat \ell_t(i\to j)^2 \right] \leq \eta  \sum_{t=1}^T |S_t| \,.
\]
Therefore, substituting the above inequality and~\eqref{eq:lhs}  into~\eqref{eq:pseudoloss}, and optimizing $\eta$ concludes the proof.
\end{proof}


\section{Proof of Theorem~\ref{thm:dueling}}
\label{app:db}

\begingroup
\def\thetheorem{\ref{thm:dueling}}
\begin{theorem}
Consider the problem setting of Sleeping DB defined above (\cref{sec:dueling}) and let $T \geq 1$. Then, Sparring SI-EXP3 satisfies \vspace*{-5pt}
\[
    \E[R_T^{\texttt{SI-DB}}] \leq 2 K^2 \sqrt{2 T K \log K} \,.
\]
\end{theorem}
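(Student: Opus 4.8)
The plan is to exploit the sparring structure and reduce $R_T^{\texttt{SI-DB}}$ to the internal sleeping regrets of the two SI-EXP3 copies $\cA^{\text{left}}$ and $\cA^{\text{right}}$, each controlled by \cref{thm:internal}. Order the arms according to the total order guaranteed for $P$, so that $P(k,x)\ge P(k',x)$ for all $x$ whenever $k$ precedes $k'$; for a nonempty $A\subseteq[K]$ write $\sigma(A)$ for its first element, so that $\max_{k\in A}P(k,x)=P(\sigma(A),x)$ for every $x$.

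First I would establish a pathwise identity. At round $t$ set $m_t:=\sigma(A_t(i_t))$ and $n_t:=\sigma(A_t(j_t))$; these are the maximizers in the two $\max$'s defining $R_T^{\texttt{SI-DB}}$. Since the learner plays a pair with $j_t\in A_t(i_t)$ and, by symmetry of $A_t$, $i_t\in A_t(j_t)$, the arm $m_t$ precedes (or equals) $j_t$ and $n_t$ precedes (or equals) $i_t$ in the order, so $P(m_t,i_t)\ge P(j_t,i_t)$ and $P(n_t,j_t)\ge P(i_t,j_t)$. Using $P(i_t,j_t)+P(j_t,i_t)=1$,
\[
 P(m_t,i_t)+P(n_t,j_t)-1=\big(P(m_t,i_t)-P(j_t,i_t)\big)+\big(P(n_t,j_t)-P(i_t,j_t)\big),
\]
so $R_T^{\texttt{SI-DB}}=\tfrac12(\mathrm{Reg}^{\mathrm{R}}+\mathrm{Reg}^{\mathrm{L}})$ with $\mathrm{Reg}^{\mathrm{R}}:=\sum_t(P(m_t,i_t)-P(j_t,i_t))\ge 0$ and $\mathrm{Reg}^{\mathrm{L}}:=\sum_t(P(n_t,j_t)-P(i_t,j_t))\ge 0$. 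Each of these is the regret of one copy against a per-round benchmark: $\cA^{\text{right}}$ runs on availability sets $A_t(i_t)$ with conditional expected losses $\mu^{(t)}_k:=P(i_t,k)=1-P(k,i_t)$, and $\mathrm{Reg}^{\mathrm{R}}=\sum_t(\mu^{(t)}_{j_t}-\mu^{(t)}_{m_t})$, where $m_t$ is exactly the available arm of smallest expected loss; symmetrically for $\cA^{\text{left}}$ with losses $\mu^{(t)}_k=P(j_t,k)$ on $S_t\supseteq A_t(j_t)\ni n_t$.

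The key point for the next step is that, although the loss vectors $\mu^{(t)}$ vary with $t$ (through $i_t$, resp.\ $j_t$), the total-order assumption forces their ranking to be the same fixed order for every $t$; hence the "better set" $D_i:=\{j:\mu^{(t)}_j\le\mu^{(t)}_i\}$ is independent of $t$, and $m_t\in D_{j_t}$, $n_t\in D_{i_t}$. This is precisely the situation handled by the proof of \cref{lem:implication}, which uses only these two facts and not genuine i.i.d.\ losses: decomposing $\mu^{(t)}_{j_t}-\mu^{(t)}_{m_t}=\sum_i\sum_{j\in D_i}(\mu^{(t)}_i-\mu^{(t)}_j)\indic\{j_t=i,\ m_t=j\}$, enlarging $\indic\{m_t=j\}\le\indic\{j\in A_t(i_t)\}$ (legitimate since $\mu^{(t)}_i-\mu^{(t)}_j\ge0$ on $D_i$ and $m_t\in A_t(i_t)$), and taking expectations (the fresh duel noise being independent of $(i_t,j_t,A_t)$ given the past) yields $\E[\mathrm{Reg}^{\mathrm{R}}]\le\sum_{i}\sum_{j\in D_i\setminus\{i\}}\E[R_T^{\text{int,right}}(i\to j)]$, and likewise $\E[\mathrm{Reg}^{\mathrm{L}}]\le\sum_{i}\sum_{j\in D_i\setminus\{i\}}\E[R_T^{\text{int,left}}(i\to j)]$.

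Finally, each copy is an instance of SI-EXP3 on $[0,1]$-valued (Bernoulli) losses and availability sets of size at most $K$, so \cref{thm:internal} gives $\E[R_T^{\text{int}}(i\to j)]\le 2\sqrt{2TK\log K}$ for every ordered pair. Since $\sum_{i=1}^K|D_i\setminus\{i\}|=\binom K2\le K^2$, summing over pairs bounds each of $\E[\mathrm{Reg}^{\mathrm{R}}],\E[\mathrm{Reg}^{\mathrm{L}}]$ by $2K^2\sqrt{2TK\log K}$, whence $\E[R_T^{\texttt{SI-DB}}]=\tfrac12(\E[\mathrm{Reg}^{\mathrm{R}}]+\E[\mathrm{Reg}^{\mathrm{L}}])\le 2K^2\sqrt{2TK\log K}$. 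The step I expect to be most delicate is making the reduction rigorous under an \emph{adaptive} loss sequence: $\cA^{\text{left}}$'s loss $o_t(j_t,\cdot)$ depends on $j_t$, which $\cA^{\text{right}}$ chooses after, and in reaction to, $i_t$. One must check that \cref{thm:internal} still applies to each copy in this regime (it does: its per-round bound is pathwise and the only expectation identity it relies on survives the extra conditioning, exactly as in the MAB-to-DB reduction of~\cite{saha2022versatile}) and that the benchmark losses $\mu^{(t)}_{m_t},\mu^{(t)}_{n_t}$ are well defined and comparable to the played losses, which holds because the outcome of any fixed duel is independent of the players' current choices given the past. The remaining algebra and counting are routine.
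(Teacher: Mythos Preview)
Your proposal is correct and follows essentially the same route as the paper: the same algebraic decomposition of $R_T^{\texttt{SI-DB}}$ into a left and a right sparring term via $P(i,j)+P(j,i)=1$, the same enlargement $\indic\{\text{argmax}=j\}\le\indic\{j\in\text{availability set}\}$ justified by nonnegativity of $\mu^{(t)}_i-\mu^{(t)}_j$ on $D_i$, the same observation that the total-order assumption makes $D_i$ independent of $t$, and the same per-pair invocation of \cref{thm:internal} summed over at most $K^2$ pairs. You are somewhat more explicit than the paper about the adaptivity of the loss sequences seen by each copy (which the paper dispatches by citation), and you count $\binom{K}{2}$ rather than $K^2$ pairs, but neither changes the structure or the final bound.
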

\addtocounter{theorem}{-1}
\endgroup

\begin{proof}
Denote by $j_t^* = \argmax_{j \in A_t(i_t)} P(j,i_t)$ and by $i_t^* = \argmax_{i \in A_t(j_t)} P(i, j_t)$. Then, using that $P(i,j) = 1- P(j,i)$, we have
\begin{align}
    \E[R_T^{\texttt{SI-DB}}] & := \E\left[ \sum_{t=1}^T \frac{P(j_t^*, i_t) + P(i_t^*, j_t) - 1}{2}\right] \nonumber  \\
    & := \E\left[ \sum_{t=1}^T \frac{P(j_t,i_t)  - P(j_t,i_t^*)}{2} \right] + \E\left[ \sum_{t=1}^T \frac{P(i_t,j_t) - P(i_t,j_t^*)}{2}\right]  \,. \label{eq:regretdueling}
\end{align}
Let us focus on the first term of the r.h.s, the other one can be analysed similarly.
\begin{align*}
   P(j_t,i_t)  - P_t(j_t,i_t^*) 
        & =  \sum_{i=1}^K \sum_{i' = 1}^K  \big( P(j_t,i)  - P(j_t,i') \big) \indic\{i = i_t, i'= i_t^*\} \\
        & \leq  \sum_{i=1}^K \sum_{i' \in D_i} \big( P(j_t,i)  - P(j_t,i') \big) \indic\{i = i_t, i' \in S_t \} \,,
\end{align*}
where $D_i := \{i' \in S_t: P(i',j_t) \geq  P(i, j_t)\}$. The last inequality is because $i_t^* \in S_t \cap D_i$ and $P(j_t,i) - P(j_t,i') > 0$ for any $i' \in D_i$. Note that $D_i$ does not depend on $j_t$ because of the total ordering assumption. 
Then, taking the expectation and summing over $t$, we get
\begin{align*}
    R_T^{\text{left}} 
        & := \E\left[ \sum_{t=1}^T P(j_t,i_t)  - P_t(j_t,i_t^*) \right] \\
        & \leq \sum_{i=1}^K \sum_{i'\in D_i} \E\left[ \sum_{t=1}^T \big( P(j_t,i)  - P(j_t,i') \big) \indic\{i = i_t, i' \in S_t \}\right] \\
        & \leq \sum_{i=1}^K \sum_{i'\in D_i} \E\left[ \sum_{t=1}^T \big( \ell_t^{\text{left}}(i)  - \ell_t^{\text{left}}(i')\big) \indic\{i = i_t, i' \in S_t \}\right] \\
        & \leq \sum_{i=1}^K \sum_{i'\in D_i} 2 \sqrt{2TK \log K}  \leq 2 K^2 \sqrt{2TK \log K} \,,
\end{align*}
where the second to last inequality is by Theorem~\ref{thm:internal} by construction of $\cA^{\text{left}}$ which minimizes the internal regret. Similarly, we can show that 
\[
    R_T^{\text{right}} := \E\left[ \sum_{t=1}^T P(i_t,j_t)  - P_t(i_t^*,j_t) \right] \leq 2 K^2 \sqrt{2TK \log K} \,.
\]
Substituting both upper-bounds into~\eqref{eq:regretdueling} concludes the proof.
\end{proof}

\section{Experiments}
\label{app:expts}

\subsection{Additional experiments on sleeping multi-armed bandits}

In this section, we run some additional experiments to compare the $3$ algorithms: 
\begin{itemize}[nosep,leftmargin=*]
    \item SI-EXP3: Our proposed algorithm Internal Sleeping-EXP3 described in Sec.~\ref{sec:alg_internal}; 
    \item S-UCB: The sleeping UCB procedure proposed by \cite{kleinberg+10} for ordering regret with stochastic losses;
    \item S-EXP3: The algorithm Sleeping-EXP3G designed by \cite{saha2020improved} for ordering regret with adversarial losses and stochastic sleeping.
\end{itemize}
Again, each experiment is run 50 times and the policy regret is plotted in Figure~\ref{fig:exp2}.

\paragraph{Random environment with dependence} This setup is similar to the dependent environment of \cref{sec:expts} where the distribution of $(S_t,\ell_t)$ are uniformly sampled at the start of each run. 

More precisely. We consider the following stochastic environment with $K = 5$. The pairs $(S_t, \ell_t)$ are i.i.d. and sampled as follows. 

At the start of each run, five availability sets $\cA_1,\dots,\cA_5 \subseteq [K]$ are sampled by including independently each action with probability $1/2$. If a set contains no action, it is sampled again. 
Then, for each set $m =1,\dots,5$, a mean vector $\mu_m \in \R^K$ is uniformly sampled on $(0,1)^K$. Then, for $t=1,\dots,T$, the availability set $S_t$ is drawn uniformly from $\{\cA_1,\dots,\cA_5\}$ and the losses of each arm $k$ is sample from a Bernoulli with parameter $\mu_{m_t}(k)$, where $m_t$ is such that $S_t = \cA_{m_t}$. 

\paragraph{Random two-player zero-sum games} This setup is similar to the Rock-Paper-Scissors environment of \cref{sec:expts} but with $K = 10$ players and a random payoff matrix.  

At the start of each run, a payoff matrix $G \in \R^{K\times K}$ is randomly sampled as follows. 
For each $1\leq i<j\leq K$, 
$\smash{G_{ij} {\stackrel{\text{i.i.d.}}{\sim}} \text{Unif}\big((-1,1)\big)}$, 
$G_{ii} = 1/2$ and $G_{ji} = - G_{ij}$:
\[
    G = \left(
    \begin{smallmatrix}
        0 & G_{12} & G_{13} & \dots \\
        -G_{12} & 0 & G_{23} & \dots\\
        -G_{13} & -G_{23} & 0 & \dots \\
        \dots & \dots & \dots & 0
    \end{smallmatrix}
    \right) \,.\vspace*{-5pt}
\]
Furthermore, $4$ availability sets $(\cA_m)_{1\leq m\leq 4}$
 are randomly sampled by including each action with probability $1/2$. 
 For each $m \in [4]$, we compute $p_m \in \Delta_K$ the Nash equilibrium of the game $G$ restricted to actions in $\cA_m$.  Note that $p_m(k) = 0$ for all $k \notin \cA_m$. 
 Then, for each $t=1,\dots, T$, an availability set $S_t = \cA_{m_t}$ is uniformly sampled in $\{\cA_1,\dots,\cA_4\}$.  The algorithm is asked to choose an action $k_t \in S_t$ and receives the loss $\ell_t(k_t) \sim \cB(G_{j_t k_t})$, where $j_t$ is the action chosen by an optimal adversary that follows $p_{m_t}$.

 The optimal strategy in this case should be too also follow $k_t \sim \cA_{m_t}$ and would incur $\E[\ell_t(k_t)] = 1/2$. Figure~\ref{fig:exp2} (right) plots the cumulative pseudo-regret $R_T = \sum_{t=1}^T G_{k_t,j_t} - T/2$. As we can see, SI-EXP3 significantly outperforms S-UCB and S-EXP3. It would be worth to investigate if SI-EXP3 could be used to compute Nash equilibria in repeated two-player zero-sum games with non-available actions.

\begin{figure}
\centering
\includegraphics[width=.3\textwidth]{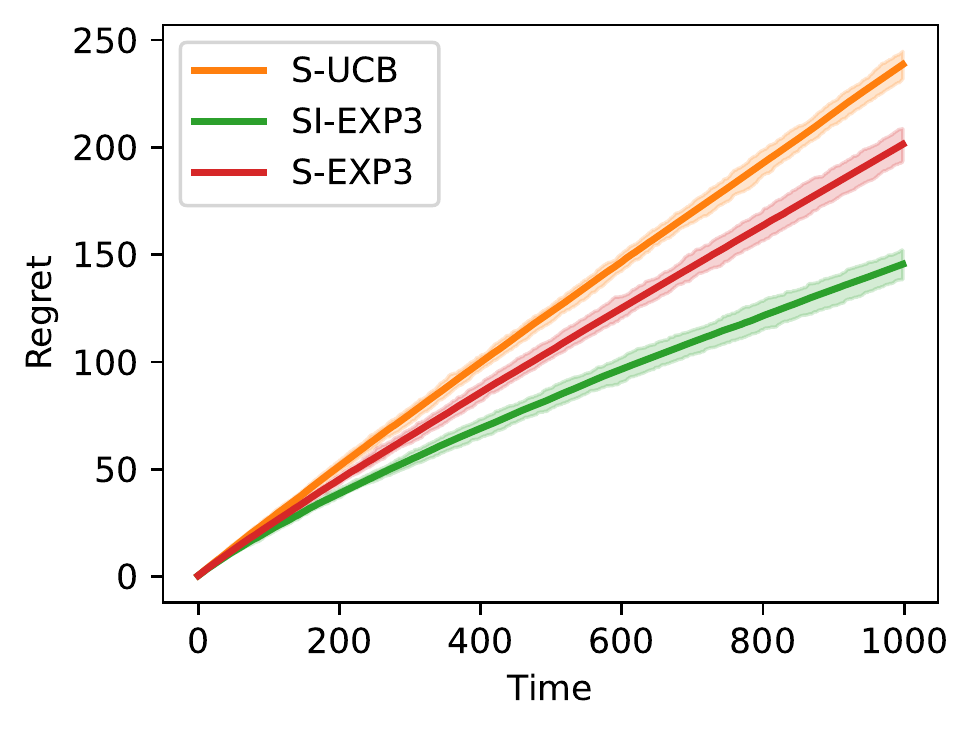}\quad 
\includegraphics[width=.3\textwidth]{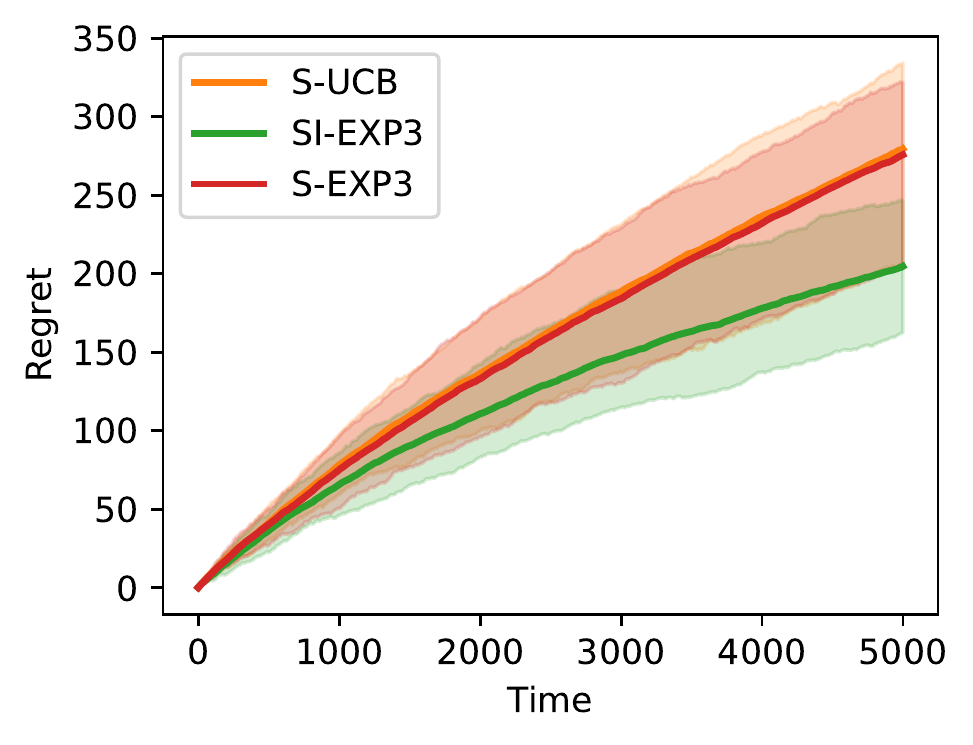}
\caption{[Left] Random environment with dependence [Right] Random games}
\label{fig:exp2}
\end{figure}

\subsection{Experiments on sleeping dueling bandits}

\paragraph{Dueling Bandits with non-repeating arms} This experimental setup is motivated by the first example in Sec. 4.1 where we want the algorithm to converge to the top 2 items (best pair). We consider utility scores $\lbrace u_1,u_2,...,u_K \rbrace$ corresponding to the $K$ arms, and the preference matrix $P$, with $P_{ij}$ defined as $P_{ij}=\frac{u_i}{u_i+u_j}$ indicating the probability of arm $i$ winning over arm $j$. We repeat this experiment for $M$ independent runs, by sampling a random utility vector at the beginning of each run. We assume that all the arms are available for the first bandit. All the arms except the one chosen by the first bandit, is available to the second bandit. Each bandit runs its own custom algorithm (which can be UCB, SI-EXP3, etc.). Finally, the winning arm is decided according to $P$ and the loss is $1$ for the bandit that chose this arm and $0$ for the other bandit. In the figures below, we plot the Internal Sleeping DB regret for choices of Sp-UCB and Sp-SIEXP3 (Sparring UCB, SI-EXP3 where both bandits internally use the UCB algorithm and the SI-EXP3 algorithm respectively). In \cref{fig:exp3} we plot Sp-SIEXP3 and Sp-UCB and we see that in this relatively simple setting Sp-UCB outperforms Sp-SIEXP3.

Note that despite its surprisingly good performance in \cref{fig:exp3}, especially for small number of arms, Sp-UCB has no theoretical guarantees for dueling bandits. It would be interesting to study whether such guarantees are possible or whether it has a linear worst-case regret. Furthermore, Sp-UCB strongly assumes a total and fixed ordering of stock performance. As we can see in the following example, Sp-SI-EXP3 works better as soon as there is some dependence between the preference matrix and the availabilities. It is also worth to emphasize that we could not compare with classical dueling bandit algorithm that are not suited for this setting.

\begin{figure}
\centering
\includegraphics[width=.3\textwidth]{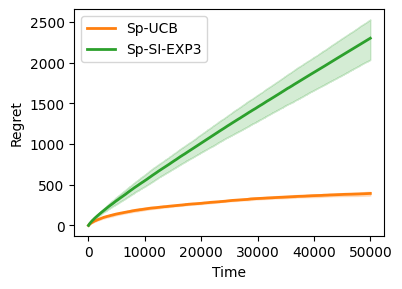}\quad 
\includegraphics[width=.3\textwidth]{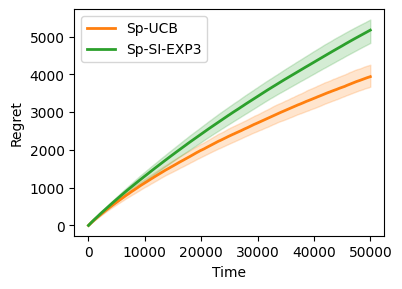}
\caption{Dueling Bandits with non-repeating arms for $K=4$ [Left] and $K=30$ [Right] respectively. ($M=5$).}
\label{fig:exp3}
\end{figure}

\begin{figure}
\centering
\includegraphics[width=.3\textwidth]{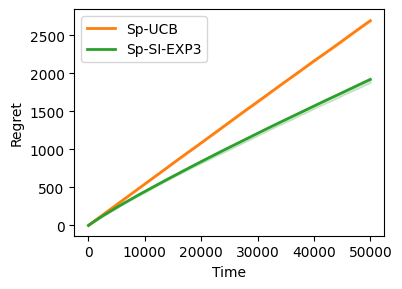}
\caption{Preference Learning with Categories where Utilities depend on Availability.}
\label{fig:exp4}
\end{figure}

\paragraph{Preference Learning with Categories} In this experimental setup we have availability dependent utility matrices. This is motivated by the following setting: if one item of a category is unavailable, the overall utility values of all items in the category goes down. In the real world, this could be in a setting where I would want to watch a season of a show only if all the seasons are available. Concretely, we have $K$ different availability sets, where $\cA_i$ has all items available except $i$. We also have $K$ utility vectors: $\lbrace  u_1,u_2,...,u_K \rbrace$. At each turn we randomly choose $r \in \lbrace 1,2,...,K \rbrace$ and select $\cA_r$ and $u_r$. Similar to the previous setting, the first bandit chooses an available item and the second bandit chooses an available item except the one chosen by the first bandit. In \cref{fig:exp4}, we choose the utility vectors as $\lbrace (1,2,...,K), (K,1,2,..., K-1), ..., (2,3,..., K,1)\rbrace$ and we see that Sp-SIEXP3 significantly outperforms Sp-UCB.

}

\end{document}